\definecolor{myred}{HTML}{F92F6C}
\definecolor{mygreen}{HTML}{16C232}
\definecolor{mypurple}{HTML}{AC6AEB}
\definecolor{myblue}{HTML}{2692F3}
\definecolor{code_blue}{HTML}{609AD1}
\definecolor{code_dkgreen}{rgb}{0,0.6,0}
\definecolor{code_gray}{rgb}{0.5,0.5,0.5}
\definecolor{code_mauve}{rgb}{0.58,0,0.82}
\tiny\color{code_gray},
\theoremstyle{plain}
\newtheorem{theorem}{Theorem}[section]
\newtheorem{proposition}[theorem]{Proposition}
\theoremstyle{definition}
\theoremstyle{remark}
\title{\textit{Grokfast}: Accelerated Grokking by \\ Amplifying Slow Gradients}
\newcommand{\ece}{\spadesuit}
\newcommand{\ipai}{\heartsuit}
\author{%
  \href{https://jaerinlee.com}{Jaerin Lee}$^{*\ece}$ \And \href{https://scholar.google.com/citations?hl=en\&user=iuMRdnIAAAAJ}{Bong Gyun Kang}\thanks{Equal contribution.}$^{\,\,\,\ipai}$ \And \href{https://github.com/kihoon96}{Kihoon Kim}$^{\ipai}$ \And \href{https://cv.snu.ac.kr/index.php/kmlee}{Kyoung Mu Lee}$^{\ece\ipai}$\And\\[-2em]
  $^{\ece}$ASRI, Department of ECE, $^{\ipai}$Interdisciplinary Program in Artificial Intelligence,\\
  Seoul National University, Korea\\
  \texttt{\{ironjr,luckypanda,kihoon96,kyoungmu\}@snu.ac.kr}
}
\begin{document}

\maketitle

\begin{abstract}
One puzzling artifact in machine learning dubbed \emph{grokking} is where delayed generalization is achieved tenfolds of iterations after near perfect overfitting to the training data.
Focusing on the long delay itself on behalf of machine learning practitioners, our goal is to accelerate generalization of a model under grokking phenomenon.
By regarding a series of gradients of a parameter over training iterations as a random signal over time, we can spectrally decompose the parameter trajectories under gradient descent into two components: the fast-varying, overfitting-yielding component and the slow-varying, generalization-inducing component.
This analysis allows us to accelerate the grokking phenomenon more than $\times 50$ with only a few lines of code that amplifies the slow-varying components of gradients.
The experiments show that our algorithm applies to diverse tasks involving images, languages, and graphs, enabling practical availability of this peculiar artifact of sudden generalization.
Our code is available at \url{https://github.com/ironjr/grokfast}.
\end{abstract}

\section{Introduction}
\label{sec:1_intro}
%%%%%%%%%%%%%%%%%%%%%%%%%%%%%%%%%%%%%%%%%%%%%%%%%%%%%%%%%%%%%%%%%%%%%%%%%%%%%%%
\begin{wrapfigure}[14]{r}{0.4\linewidth}
    \centering
    \vspace{-1.5em}%
    \includegraphics[width=0.4\textwidth]{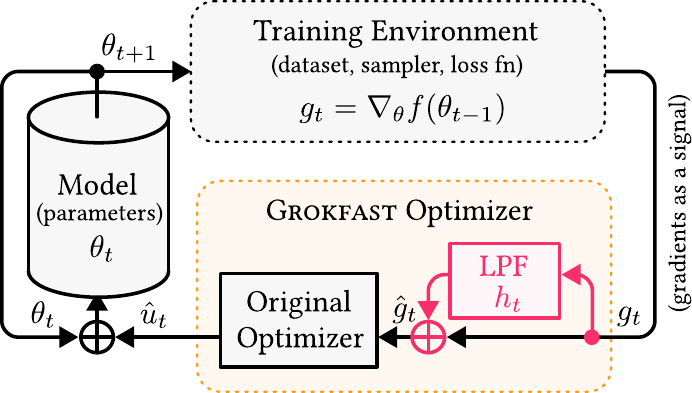}
    \vspace{-1.em}
    \caption{%
    \textsc{Grokfast} interprets training dynamics of a network parameter as a stochastic signal and amplify the low frequency variations for faster grokking.
    }
    \label{fig:concept}
\end{wrapfigure}
%%%%%%%%%%%%%%%%%%%%%%%%%%%%%%%%%%%%%%%%%%%%%%%%%%%%%%%%%%%%%%%%%%%%%%%%%%%%%%%
Grokking is a recently discovered phenomenon where generalization is achieved long after a model overfits to the training data.
The phenomenon was first reported by \cite{grok} for a two-layer Transformer~\citep{transformer} trained using a simple algorithmic dataset.
Later, \cite{omnigrok} have shown that similar artifacts are observed for various model architectures trained with a variety of datasets, including images, languages, and graphs.
Many theory-oriented works have tried to justify the effect by relating the grokking phenomenon to the previously known double descent phenomenon \citep{grok_dd_2,grok_dd_3}, yet its cause and sufficient conditions have not been fully characterized.

%%%%%%%%%%%%%%%%%%%%%%%%%%%%%%%%%%%%%%%%%%%%%%%%%%%%%%%%%%%%%%%%%%%%%%%%%%%%%%%
\begin{figure}[t]
\vskip 0.2in
\begin{center}
\centering
\subfloat[\small Accelerated grokking with \textsc{Grokfast}. \label{fig:figure_one:result}]{\includegraphics[width=0.50\linewidth]{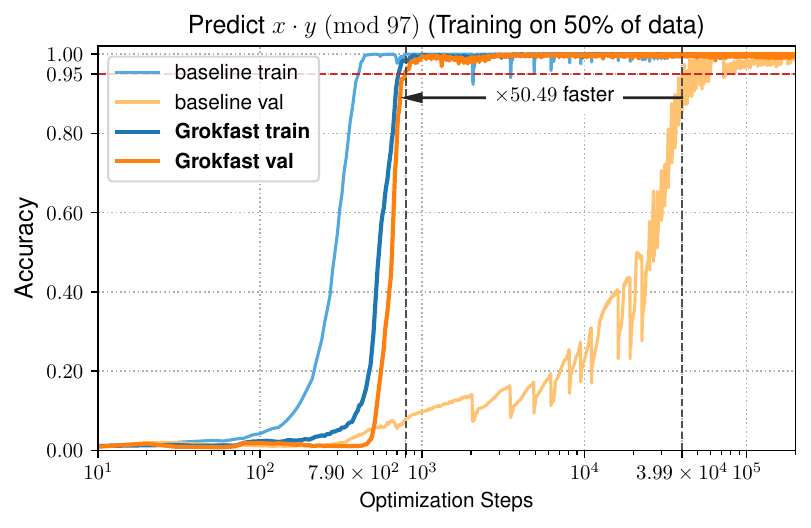}}
\hfill
\subfloat[\small Corresponding loss curves. \label{fig:figure_one:result_loss}]{\includegraphics[width=0.49\linewidth]{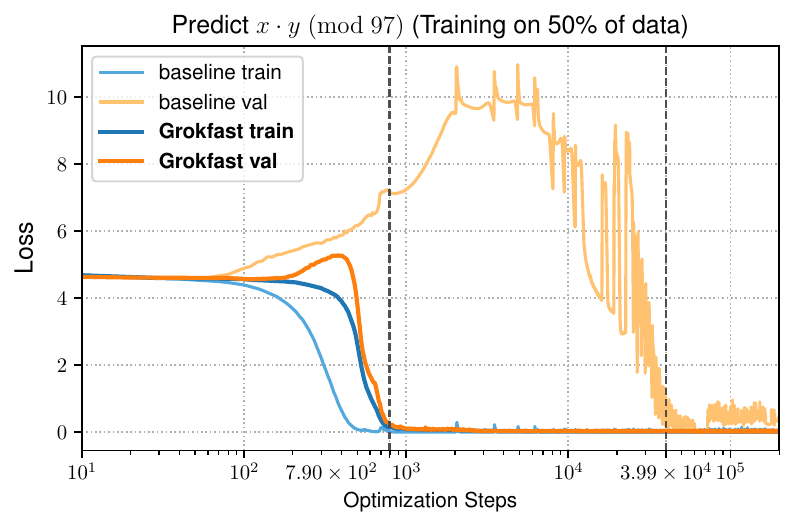}}
% \\[.2em]
\\
\caption{%
\textbf{Accelerating generalization of a model under grokking phenomenon.}
Our $\textsc{Grokfast}$ is a simple algorithmic modification upon existing optimizers to pull forward the time of event of sudden generalization after overfitting, also known as the \emph{grokking} phenomenon.
% Figure~\ref{fig:figure_one:result} is obtained by $\textsc{Grokfast-MA}$ with weight decay 0.01
}
\label{fig:figure_one}
\end{center}
\vskip -0.2in
\end{figure}
%%%%%%%%%%%%%%%%%%%%%%%%%%%%%%%%%%%%%%%%%%%%%%%%%%%%%%%%%%%%%%%%%%%%%%%%%%%%%%%

Apart from theoretical studies, this work takes a practitioner's standpoint to take advantage of the grokking phenomenon.
In previous reports on grokking \citep{grok,omnigrok}, generalization happens only after more than a tenfold of training iterations after overfitting.
This high demand of computational resources means less practical appeal to general machine learning practitioners who are often under dire resource constraints.
Achieving faster generalization in those overfitting systems is, therefore, a necessary step to fully explore the potential of this unusual behavior.
Our goal is, to this end, to accelerate the grokking phenomenon.

In the example training curve of a model under grokking in Figure~\ref{fig:figure_one}, the dynamics of the validation loss is few orders of magnitude slower than the dynamics of the training loss.
The change in the losses is a direct consequence of the change in the parameter values throughout the training session.
Hence, Figure~\ref{fig:figure_one} suggests that the parameter updates under grokking takes effect in two different timescales:
the fast-varying component of the parameter updates contributes to the rapid overfitting, and the slow-varying component contributes to the slow generalization.

% The speed of the parameter updates can be defined as follows:
We begin by treating the change of value $u(t)$ of each model parameter $\theta$ over training iteration $t$ from an optimizer as a discrete (random) signal over time.
As the optimizer iterates the training data, the value of each parameter $\theta(t)\,$, the loss $l(t)\,$, and its gradient $g(t) \coloneqq \partial l(t) / \partial \theta(t)$ drift with respect to the guidance from a sequence of randomly selected mini-batches sampled at each iteration $t\,$:
\begin{equation}
    \label{eq:1_intro:randproc_model}
    \theta(t + 1) = \theta(t) + u(g(t), t) = \theta({0}) + \sum_{\tau = 0}^{t} u(g(\tau), \tau)\,.
\end{equation}
The parameter update function $u(t) = u(g(t), t) = \theta(t + 1) - \theta(t)$ provides a simple abstraction of the underlying optimizer.
Different instances of iterative optimizers including SGD with various hyperparameters, e.g., learning rate and momentum, can be dealt with this notation.

Treating the optimization process as a collection of discrete random signals $u(t)$ allows us to consider its dual representation $U(\omega)$ in the \emph{frequency domain}.
Taking the discrete-time Fourier transform $\mathcal{F}$ of $u(t)$ with respect to the training iteration $t\,$, we obtain the spectral representation of the sequence of changes of a specific parameter $\theta\,$:
\begin{equation}
    \label{eq:1_intro:fourier}
    U(\omega) = \mathcal{F}\{u(t)\} =  \sum_{t = 0}^{T} u(t) e^{-i \omega t}\,,
\end{equation}
where $T$ is the total number of training iterations in this specific training session.
Under our assumption that the delayed generalization of grokking is a consequence of the slow-varying component of the parameter updates $u(t)\,$, the grokking phenomenon is directly related to the low-frequency part of the dual representation $U(\omega)\,$.
Further, for a first-order optimizer $u(g(t), t)\,$, an almost unanimous choice in deep learning applications, the gradients $g(t)$ are linearly correlated with the parameter updates $u(t)\,$.
Therefore, we can relate the low-frequency part of the gradient signal $G(\omega) = \sum_{t = 0}^{T} g(t) e^{-i \omega t}$ with the slow generalization under grokking.
Our hypothesis is that \emph{amplifying this low-frequency component of $G(\omega)$ accelerates the speed of generalization under the grokking phenomenon}.

In the following sections, we empirically demonstrate this hypothesis with a simple low-frequency gradient amplifier in various scenarios.
These include tasks involving various network architectures including Transformers~\citep{transformer}, MLPs, RNNs and (Graph-)ConvNets and diverse datasets such as algorithmic data, images, languages, and graphs that are treated to exhibit the grokking phenomenon~\citep{omnigrok}.
Our method is simple, taking only a few lines of additional code and is applicable to most of machine learning frameworks such as PyTorch \citep{pytorch}, with $\times 50$ faster exhibition of grokking as shown in Figure~\ref{fig:figure_one}.

\section{Amplifying the Low-Frequencies of the Stochastic Gradients}%Parameter Updates}
\label{sec:2_pre}
\subsection{Filter Design}
Amplifying the low-frequencies of the gradients $g(t)$ can be achieved by adding a low-pass filtered signal $g(t)$ to itself.
Let $h(t)$ be a discrete-time low-pass filter (LPF) defined over the training iteration $t\,$.
For simplicity, we assume a univariate time-invariant low-pass filter $h(t)$ uniformly applied across every model parameter $\theta\,$.
Using a convolution operator $*\,$, we denote the modified gradient $\hat{g}(t)$ as:
\begin{equation}
    \label{eq:2_pre:lpf_time}
    \hat{g}(t) = g(t) + h(t) * g(t)\,,
\end{equation}
which can then be plugged into the parameter update function $u$ of the optimizer:
\begin{equation}
    \label{eq:2_pre:update_lpf_time}
    \hat{u}(t) = u(\hat{g}(t), t) = u(g(t) + h(t) * g(t), t)\,.
\end{equation}
In the dual domain, equation~\eqref{eq:2_pre:lpf_time} is equivalent to:
\begin{align}
    \label{eq:2_pre:lpf_freq}
    \hat{G}(\omega) = G(\omega) + H(\omega) G(\omega) = (1 + H(\omega)) G(\omega)\,,
\end{align}
where $H(\omega) = \sum_{t = 0}^{T} h(t) e^{-i \omega t}$ is the transfer function of the filter $h(t)\,$.
Our goal can therefore be restated as to design a filter $h(t)$ with low-pass characteristics in its transfer function $H(\omega)\,$.

%%%%%%%%%%%%%%%%%%%%%%%%%%%%%%%%%%%%%%%%%%%%%%%%%%%%%%%%%%%%%%%%%%%%%%%%%%%%%%%
\begin{figure}[!t]
\begin{minipage}{.495\linewidth}
\begin{algorithm}[H]
\newcommand{\hlcolor}{OliveGreen}
   \caption{\textsc{Grokfast-MA}}
   \label{alg:avg}
\begin{algorithmic}[1]
   \STATE {\bfseries Param:} {\color{\hlcolor} window size $w\,$, scalar factor $\lambda\,$.}
   \STATE {\bfseries Input:} initial parameters $\theta_{0}\,$, stochastic objective function $f(\theta)\,$, optimizer's parameter update $u(g, t)$ from gradient $g$ at timestep $t\,$.
   \STATE {\bfseries begin:} $t \leftarrow 0\,$; {\color{\hlcolor} $Q \leftarrow \mathrm{Queue}(\mathrm{capacity}=w)$} %: a gradient queue.
   \WHILE {$\theta_{t}$ not converged}
       \STATE $t \leftarrow t + 1$
       \STATE $g_{t} \leftarrow \nabla_{\theta} f(\theta_{t - 1})\,$: Calculate gradients.
       \STATE {\color{\hlcolor} $\mathrm{Insert}(Q, g_{t})\,$: Insert gradients to $Q\,$.}
       \STATE {\color{\hlcolor} $\hat{g}_{t} \leftarrow g_{t} + \lambda \cdot  \mathrm{Avg}(Q)\,$: Filter gradients.}
       \STATE ${\color{\hlcolor} \hat{u}_{t}} \leftarrow u({\color{\hlcolor} \hat{g}_{t}}, t)\,$: Calculate update.
       \STATE $\theta_{t} \leftarrow \theta_{t - 1} + {\color{\hlcolor} \hat{u}_{t}}\,:$ Update parameters.
   \ENDWHILE
\end{algorithmic}
\end{algorithm}
\end{minipage}
\hfill
\begin{minipage}{.495\linewidth}
\begin{algorithm}[H]
\newcommand{\hlcolor}{OliveGreen}
    \caption{\textsc{Grokfast-EMA} (\textsc{Grokfast}).}
    \label{alg:main}
\begin{algorithmic}[1]
    \STATE {\bfseries Param:} {\color{\hlcolor} scalar momentum $\alpha\,$, factor $\lambda\,$.}
    \STATE {\bfseries Input:} initial parameters $\theta_{0}\,$, stochastic objective function $f(\theta)\,$, optimizer's parameter update $u(g, t)$ from gradient $g$ at timestep $t\,$.
    \STATE {\bfseries begin:} $t \leftarrow 0\,$; {\color{\hlcolor} $\mu \leftarrow \theta_{0}$: EMA of gradients.}
    \WHILE {$\theta_{t}$ not converged}
        \STATE $t \leftarrow t + 1$
        \STATE $g_{t} \leftarrow \nabla_{\theta} f(\theta_{t - 1})\,$: Calculate gradients.
        \STATE {\color{\hlcolor} $\mu \leftarrow \alpha \mu + (1 - \alpha) g_{t}\,$: Calculate EMA.}
        \STATE {\color{\hlcolor} $\hat{g}_{t} \leftarrow g_{t} + \lambda \mu\,$: Filter gradients.}
        \STATE ${\color{\hlcolor} \hat{u}_{t}} \leftarrow u({\color{\hlcolor} \hat{g}_{t}}, t)\,$: Calculate update.
        \STATE $\theta_{t} \leftarrow \theta_{t - 1} + {\color{\hlcolor} \hat{u}_{t}}\,:$ Update parameters.
    \ENDWHILE
\end{algorithmic}
\end{algorithm}
% \vspace{1em}
\end{minipage}
\end{figure}
\begin{figure}[t]
\newcommand{\figwidth}{0.245\linewidth}
\newcommand{\figheight}{0.163333\linewidth}
\begin{center}
\subfloat[\small Time plot of MA. \label{fig:filter:ma_time}]{\includegraphics[width=\figwidth,height=\figheight]{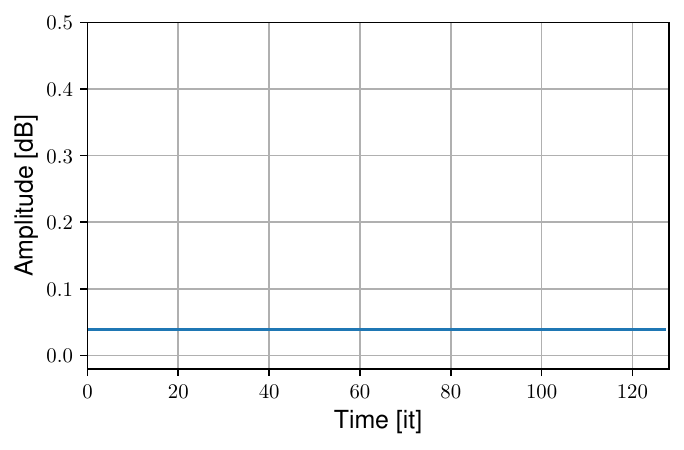}}
\hfill
\subfloat[\small Freq. plot of MA. \label{fig:filter:ma_freq}]{\includegraphics[width=\figwidth,height=\figheight]{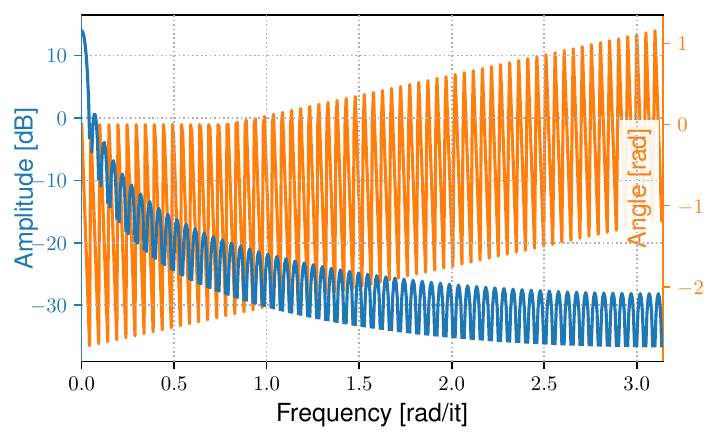}}
\hfill
\subfloat[\small Time plot of EMA. \label{fig:filter:ema_time}]{\includegraphics[width=\figwidth,height=\figheight]{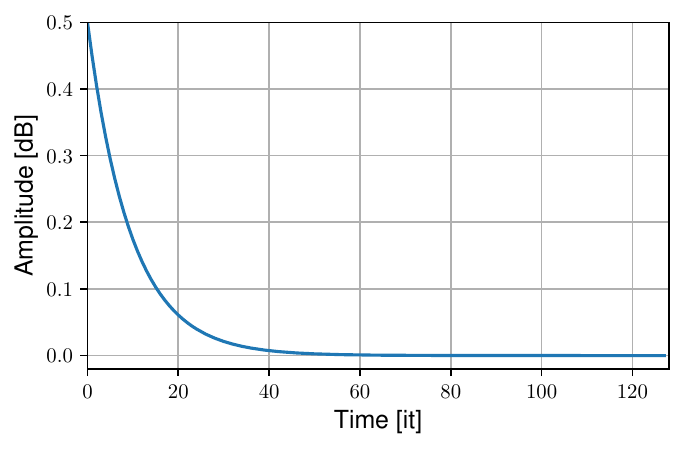}}
\hfill
\subfloat[\small Freq. plot of EMA. \label{fig:filter:ema_freq}]{\includegraphics[width=\figwidth,height=\figheight]{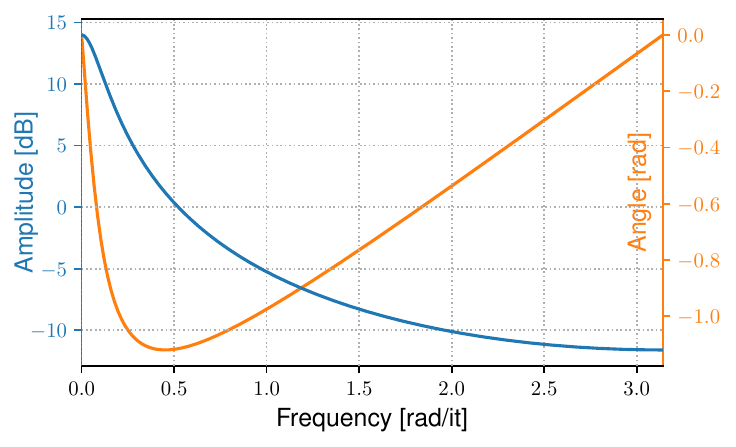}}
% \\[-.5em]
\caption{%
\textbf{Time and frequency domain plots of the gradient filters.}
Figures (a, b) and (c, d) depict the impulse responses and the transfer functions of the filters of Algorithm~\ref{alg:avg} and~\ref{alg:main}, i.e., the MA and the EMA filters $h(t)\,$.
We treat training iterations as discrete timesteps.
}
\label{fig:filter}
\end{center}
\vskip -0.2in
\end{figure}
%%%%%%%%%%%%%%%%%%%%%%%%%%%%%%%%%%%%%%%%%%%%%%%%%%%%%%%%%%%%%%%%%%%%%%%%%%%%%%%

For the demonstration of our initial claim, we first take the simplest strategy: the LPF $h(t)$ is a windowed moving average (MA) over a fixed-size window of width $w\,$.
\begin{equation}
    \label{eq:2_pre:lpf_avg}
    h(t) = \frac{\lambda}{w} \Pi \left( \frac{t}{w} - \frac{1}{2} \right)
    = \begin{cases}
        \lambda / w\,, & \text{if $0 \leq t < w$}\,.\\
        0\,, & \text{otherwise}.
    \end{cases}
\end{equation}
The function $\Pi$ stands for the Heaviside Pi function, which has value of one in the interval $[-0.5, 0.5]$ and of zero elsewhere.
This filter has only two scalar hyperparameters: the scalar factor $\lambda$ and the window size $w\,$.
As shown in Algorithm~\ref{alg:avg}, we implement this filter $h$ with a fixed-capacity queue $Q$ storing the intermediate parameter updates $u(t)$ into the queue $Q\,$.
The average of the queue $Q$ is the low-pass filtered gradients which is added to the current parameter update at each optimizer step.
%%%%%%%%%%%%%%%%%%%%%%%%%%%%%%%%%%%%%%%%%%%%%%%%%%%%%%%%%%%%%%%%%%%%%%%%%%%%%%%

%%%%%%%%%%%%%%%%%%%%%%%%%%%%%%%%%%%%%%%%%%%%%%%%%%%%%%%%%%%%%%%%%%%%%%%%%%%%%%%
\subsection{Experiment}
We first demonstrate our conjecture on the algorithmic dataset used in the first report on grokking \citep{grok}.
The network is a two-layer decoder-only Transformer~\citep{transformer} trained to predict the answer of a modular binary multiplication operation $x \cdot y\;(\mathrm{mod}\;97)\,$.
The training curve of this task is shown in Figure~\ref{fig:figure_one} as `baseline.'
Comparing the time to reach the accuracy of $0.95\,$, the generalization, i.e., the late saturation of the validation accuracy, happens after $\times 97.3$ iterations after the rapid saturation of the training accuracy (the overfitting).
Figure~\ref{fig:ma} shows empirical proof of effectiveness of Algorithm~\ref{alg:avg}, our $\textsc{Grokfast-MA}$ algorithm on this task.
Choosing the hyperparameters from a simple grid search over $\lambda \in \{1, 2, 5, 10\}$ and $w \in \{2, 5, 10, 20, 50, 100, 200\}\,$, we found that the filter works best when $\lambda = 5$ and $w = 100\,$.
As shown in Figure~\ref{fig:ma}, the number of iterations taken to make the validation accuracy reach 95\% of the training accuracy is reduced by $\times 13.57\,$, which is a remarkable reduction of training iterations.

%%%%%%%%%%%%%%%%%%%%%%%%%%%%%%%%%%%%%%%%%%%%%%%%%%%%%%%%%%%%%%%%%%%%%%%%%%%%%%%
\begin{figure}[t]
\newcommand{\figwidth}{0.495\linewidth}
\begin{center}
\subfloat[\small Accuracy with respect to amplifier gain $\lambda\,$. \label{fig:ma:acc_gain}]{\includegraphics[width=\figwidth]{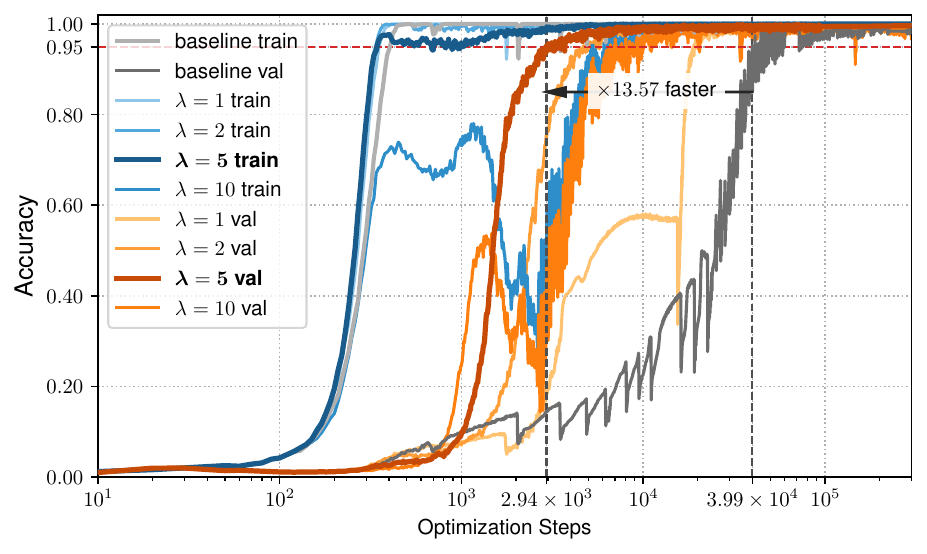}}
\hfill
\subfloat[\small Loss with respect to amplifier gain $\lambda\,$. \label{fig:ma:loss_gain}]{\includegraphics[width=\figwidth]{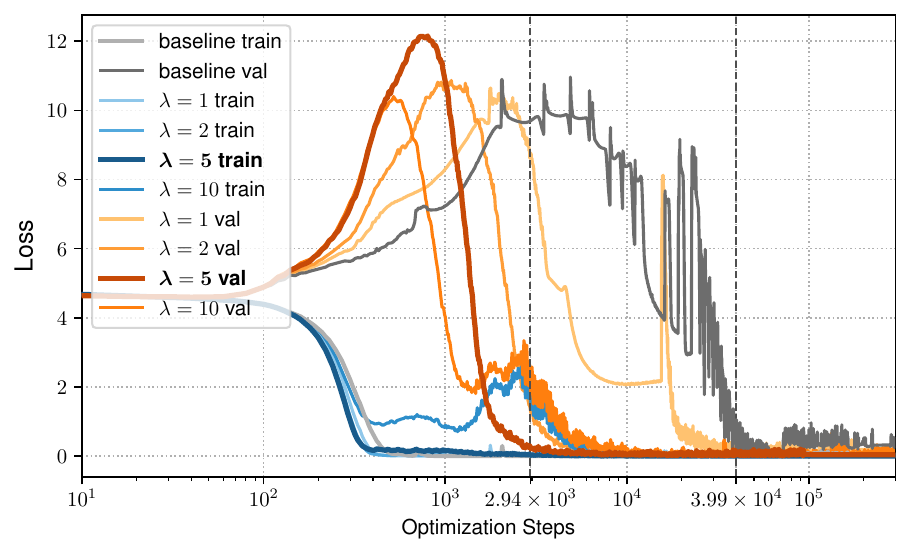}}
\\
\subfloat[\small Accuracy with respect to window size $w\,$. \label{fig:ma:acc_window}]{\includegraphics[width=\figwidth]{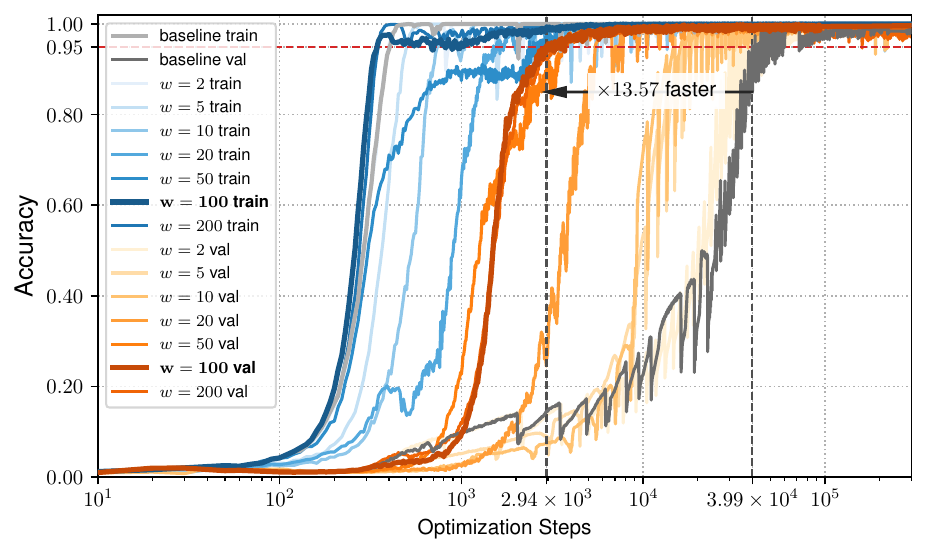}}
\hfill
\subfloat[\small Loss with respect to window size $w\,$. \label{fig:ma:loss_window}]{\includegraphics[width=\figwidth]{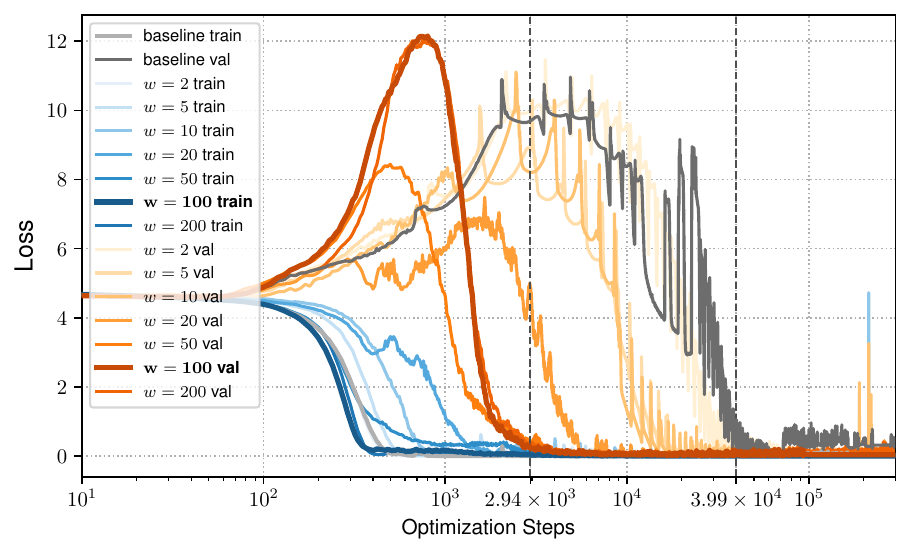}}
\caption{%
\textbf{Acceleration of delayed generation with \textsc{Grokfast-MA}.}
The amount of acceleration relies on the two hyperparameters, amplifier gain $\lambda$ and window size $w\,$.
Each hyperparameter has a sweet spot; increasing one arbitrarily does not guarantee faster acceleration.
Figures~\ref{fig:ma:acc_gain} and~\ref{fig:ma:loss_gain} use $w = 100$ except the baseline.
Figures~\ref{fig:ma:acc_window} and~\ref{fig:ma:loss_window} use $\lambda = 5$ except the baseline.
}
\label{fig:ma}
\end{center}
\vskip -0.2in
\end{figure}
%%%%%%%%%%%%%%%%%%%%%%%%%%%%%%%%%%%%%%%%%%%%%%%%%%%%%%%%%%%%%%%%%%%%%%%%%%%%%%%
\subsection{Discussion}
\label{sec:2_pre:discussion}
%%%%%%%%%%%%%%%%%%%%%%%%%%%%%%%%%%%%%%%%%%%%%%%%%%%%%%%%%%%%%%%%%%%%%%%%%%%%%%%
\begin{wrapfigure}[12]{r}{0.4\linewidth}
    \centering
    \vspace{-4.5em}%
    \includegraphics[width=0.4\textwidth]{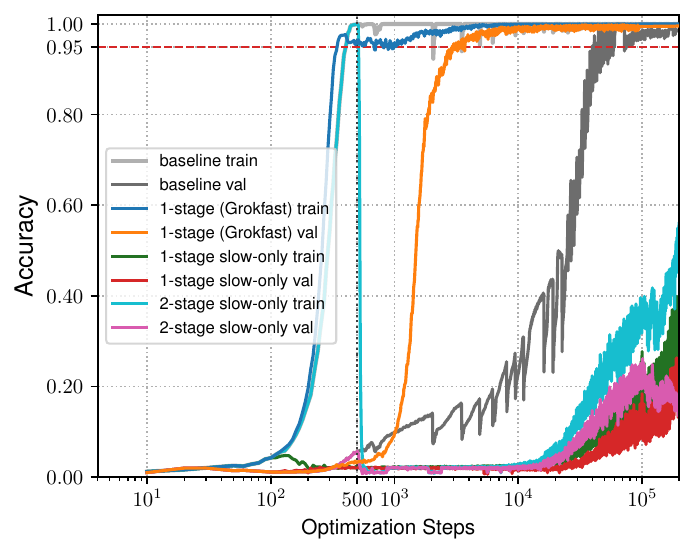}
    \vspace{-1.5em}
    \caption{%
    Although \textit{adding} the slow component of the gradients is effective in accelerating grokking, the slow component \textit{cannot} be used alone as a replacement. 
    }
    \label{fig:discussion_ma:slowonly}
\end{wrapfigure}
%%%%%%%%%%%%%%%%%%%%%%%%%%%%%%%%%%%%%%%%%%%%%%%%%%%%%%%%%%%%%%%%%%%%%%%%%%%%%%%
Figure~\ref{fig:ma} demonstrates high effectiveness of our approach.
However, few questions are still left unanswered regarding the modified training dynamics and the combined effect with the \emph{weight decay}, which is previously shown to be another important algorithmic factor that governs the grokking effect~\citep{omnigrok}.
We devise more experiments to answer these questions:

%%%%%%%%%%%%%%%%%%%%%%%%%%%%%%%%%%%%%%%%%%%%%%%%%%%%%%%%%%%%%%%%%%%%%%%%%%%%%%%
\paragraph{Q1. Are both slow and fast gradients necessary?}
Our approach is based on our belief that the low-pass filtered gradient updates, or the \emph{slow} gradients, contribute to the generalization.
The most obvious question is then: can we \textit{not} use the \emph{fast} gradients and replace the original sequence of gradients with the low-pass filtered components?
Using only the slow gradients calculated from a moving average filter in Algorithm~\ref{alg:avg} is equivalent to using larger, overlapping minibatches.
We conduct an experiment with a modified algorithm that replaces the line 8 of Algorithm~\ref{alg:avg} with $\hat{g}_{t} \leftarrow \lambda \cdot \mathrm{Avg}(Q)\,$.
Figure~\ref{fig:discussion_ma:slowonly} shows the result.
\textit{1-stage} means that the gradient replacement happens from the beginning of the training, which is set by default, and \textit{2-stage} means that the effect of \textsc{Grokfast} happens after the model overfits to the training data at iteration $500\,$.
The results clearly reveal that removing the original gradients leads to much slower and unstable training.
In conjunction with the result in Figure~\ref{fig:ma}, we can conclude that both the fast and the slow components of the gradients are necessary for faster grokking.
% that our $\textsc{Grokfast-MA}$ algorithm leads to faster generalization than the baseline that only utilizes the \textit{fast} gradients

%%%%%%%%%%%%%%%%%%%%%%%%%%%%%%%%%%%%%%%%%%%%%%%%%%%%%%%%%%%%%%%%%%%%%%%%%%%%%%%
\paragraph{Q2. Exploiting state transition in the training of a model under grokking.}
%%%%%%%%%%%%%%%%%%%%%%%%%%%%%%%%%%%%%%%%%%%%%%%%%%%%%%%%%%%%%%%%%%%%%%%%%%%%%%%
\begin{wrapfigure}[20]{l}{0.4\linewidth}
    \centering
    \vspace{-1.5em}%
    \includegraphics[width=0.4\textwidth]{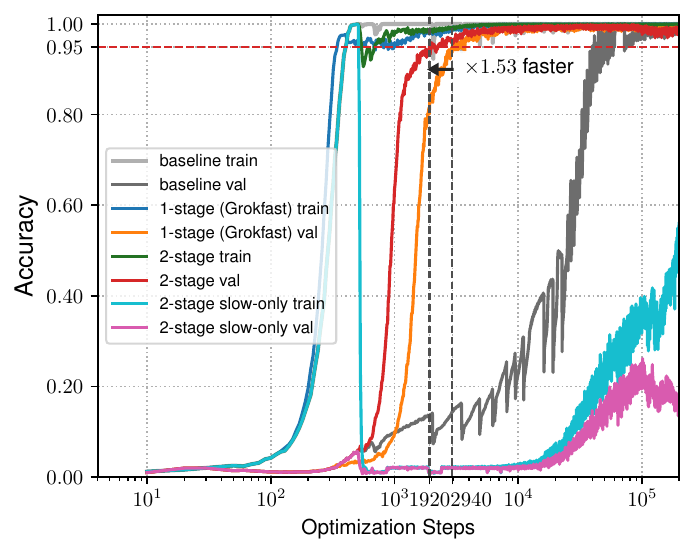}
    \vspace{-1.5em}
    \caption{%
    We can further accelerate the grokking effect with a two-staged algorithm, by applying $\textsc{Grokfast-MA}$ \textit{after} the model is overfitted (after 500 $\mathrm{its}$).
    }
    \label{fig:discussion_ma:staged}
    \captionof{table}{%
    Summary of results of Figure~\ref{fig:discussion_ma:staged}.
    }
    \label{tab:discussion_ma:staged}
    \resizebox{0.4\textwidth}{!}{%
        \begin{tabular}{llll}%
        \toprule
        Name & $\hat{g}_{t}$ at ($\mathrm{A} \rightarrow \mathrm{B}$) & $\hat{g}_{t}$ at ($\mathrm{B} \rightarrow \mathrm{C}$) & Iterations at $\mathrm{acc} \geq 0.95$ \\
        \midrule
        Baseline & ${g}_{t}$ & ${g}_{t}$ & 39,890 [$\mathrm{its}$] ($\times 1$) \\
        1-Stage & ${g}_{t} + \lambda \cdot \mathrm{Avg}(Q)$ & ${g}_{t} + \lambda \cdot \mathrm{Avg}(Q)$ & 2,940 [$\mathrm{its}$] ($\times 13.57$) \\
        \textbf{2-Stage} & ${g}_{t}$ & ${g}_{t} + \lambda \cdot \mathrm{Avg}(Q)$ & \textbf{1,920 [$\mathrm{its}$] ($\times 20.78$)} \\
        2-Stage Slow-only & ${g}_{t}$ & $\lambda \cdot \mathrm{Avg}(Q)$ & Not converged \\
        \bottomrule
    \end{tabular}%
    }
\end{wrapfigure}
%%%%%%%%%%%%%%%%%%%%%%%%%%%%%%%%%%%%%%%%%%%%%%%%%%%%%%%%%%%%%%%%%%%%%%%%%%%%%%%
We can alternatively interpret the training dynamics of a model under the grokking phenomenon as a \textit{state transition}.
In this viewpoint, the model sequentially goes through three distinct stages:
($\mathrm{A}$) \textit{initialized}, where both training and validation losses are not saturated,
($\mathrm{B}$) \textit{overfitted}, where the training loss is fully saturated but the validation loss is not, and
($\mathrm{C}$) \textit{generalized}, where both losses are fully saturated.
In the experimental setting of Figure~\ref{fig:ma}, state transition of $\mathrm{A} \rightarrow \mathrm{B}$ happens roughly after iteration $500\,$.
This interpretation allows us to try out a staged strategy for optimization, where different algorithms are applied to the model during the two transition phases $\mathrm{A} \rightarrow \mathrm{B}$ (from iteration 0 to 499) and $\mathrm{B} \rightarrow \mathrm{C}$ (after iteration 500) as described in Table~\ref{tab:discussion_ma:staged}.
Figure~\ref{fig:discussion_ma:staged} and Table~\ref{tab:discussion_ma:staged} summarize the result of the experiment.
As the results show, we can accelerate the grokking effect further by $\times 1.53$ by separating the training stage of the model and applying $\textsc{Grokfast-MA}$ only after the model becomes overfitted, suggesting an adaptive optimizer.

%%%%%%%%%%%%%%%%%%%%%%%%%%%%%%%%%%%%%%%%%%%%%%%%%%%%%%%%%%%%%%%%%%%%%%%%%%%%%%%
\paragraph{Q3. Synergistic effect with weight decay.}
%%%%%%%%%%%%%%%%%%%%%%%%%%%%%%%%%%%%%%%%%%%%%%%%%%%%%%%%%%%%%%%%%%%%%%%%%%%%%%%
\begin{wrapfigure}[17]{r}{0.56\linewidth}
    \centering
    \vspace{-1.5em}%
    \includegraphics[width=0.56\textwidth]{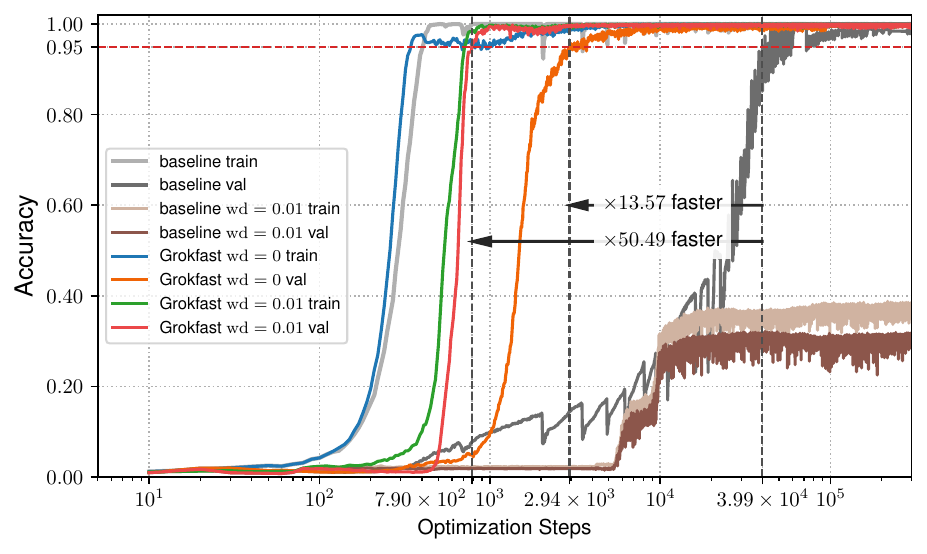}
    \vspace{-1.5em}
    \caption{%
    The acceleration effect of $\textsc{Grokfast-MA}$ is greatly enhanced when accompanied with appropriate value of weight decay.
    However, the weight decay alone not always yield beneficial results.
    }
    \label{fig:discussion_ma:wd}
\end{wrapfigure}
%%%%%%%%%%%%%%%%%%%%%%%%%%%%%%%%%%%%%%%%%%%%%%%%%%%%%%%%%%%%%%%%%%%%%%%%%%%%%%%
Besides from our gradient filtering approach, the authors of Omnigrok~\citep{omnigrok} have suggested that the weight decay hyperparameter is a critical determinant of the grokking phenomenon.
According to the report, the grokking phenomenon appears and even becomes faster when the weight decay becomes larger.
We, therefore, conduct additional experiments to find out how these two approaches affect the model when applied together.
The results are summarized in Figure~\ref{fig:discussion_ma:wd}.
Compared with the result from $\textsc{Grokfast-MA}$ with no weight decay ({\color[HTML]{f06406} orange}), applying the weight decay ({\color[HTML]{d62728} red}) generally yields even faster generalization.
The maximum acceleration appears at $\mathrm{wd} = 0.01$ with $\times 3.72$ faster generalization than $\textsc{Grokfast-MA}$ with no weight decay.
We choose this result of $\times 50.49$ faster grokking to be our main demonstration in Figure~\ref{fig:figure_one:result}.
Interestingly, Figure~\ref{fig:discussion_ma:wd} also reveals that applying the same weight decay with no $\textsc{Grokfast-MA}$ ({\color[HTML]{8c564b} brown}) makes the training unstable.
The results demonstrates that applying our gradient filtering and setting up a proper weight decay together gives synergistic benefits.

%%%%%%%%%%%%%%%%%%%%%%%%%%%%%%%%%%%%%%%%%%%%%%%%%%%%%%%%%%%%%%%%%%%%%%%%%%%%%%%

\subsection{Limitations}
\label{sec:2_pre:limit}
Although Algorithm~\ref{alg:avg} shows highly effective results, it requires $w$ times more memory to store all the previous gradients, limiting its utilization.
Replication of the model parameters also makes the training slower; using $w = 100\,$, the training time per iteration is increased by $\times 2.4\,$ measured with a single 1080 Ti GPU.
Still, the reduction of wall clock time before the delayed generalization of the results in Figure~\ref{fig:discussion_ma:wd} is $\times 20.5\,$, which is also a notable reduction of time.
Though the computation time does not linearly scale with the memory requirements, Algorithm~\ref{alg:avg} is not generally applicable to the larger models.
This problem leads to our new design utilizing sequential filter in the next section. % of Algorithm~\ref{alg:main}.

\section{Grokfast with Exponential Moving Average Gradient Filter}
\label{sec:3_method}
In the previous section, we empirically prove that using an LPF to the sequence of model parameter updates leads to faster generalization under the grokking phenomenon.
However, for practical purposes, we require an LPF design with a smaller memory footprint.
To this end, we modify Algorithm~\ref{alg:avg} by replacing the windowed moving average with an exponential moving average (EMA) filter.
The impulse response of the filter becomes:
\begin{equation}
    \label{eq:2_pre:lpf_ema}
    h(t) = \lambda (1 - \alpha) \sum_{\tau = 0}^{t} \alpha^{\tau} \delta (t - \tau)
    = \lambda \alpha^{t} (1 - \alpha)\,,
\end{equation}
where $\delta (t)$ is the discrete unit impulse at the origin.
This filter also has two hyperparameters: the scalar factor $\lambda$ and the scalar momentum $\alpha\,$.
The corresponding Algorithm~\ref{alg:main} only requires additional memory with the same size of the model itself, reducing $\times 50$ amount of required memory compared to Algorithm~\ref{alg:avg}.
The time and the frequency responses of the filters are compared in Figure~\ref{fig:filter}.

\section{Experiment}
\label{sec:4_exp}
%%%%%%%%%%%%%%%%%%%%%%%%%%%%%%%%%%%%%%%%%%%%%%%%%%%%%%%%%%%%%%%%%%%%%%%%%%%%%%%
\begin{figure}[t]
\newcommand{\figwidth}{0.495\linewidth}
\newcommand{\figwidtha}{0.332\linewidth}
\begin{center}
\subfloat[\small Accuracy of the modular multiplication task. \label{fig:ema:acc}]{\includegraphics[width=\figwidth]{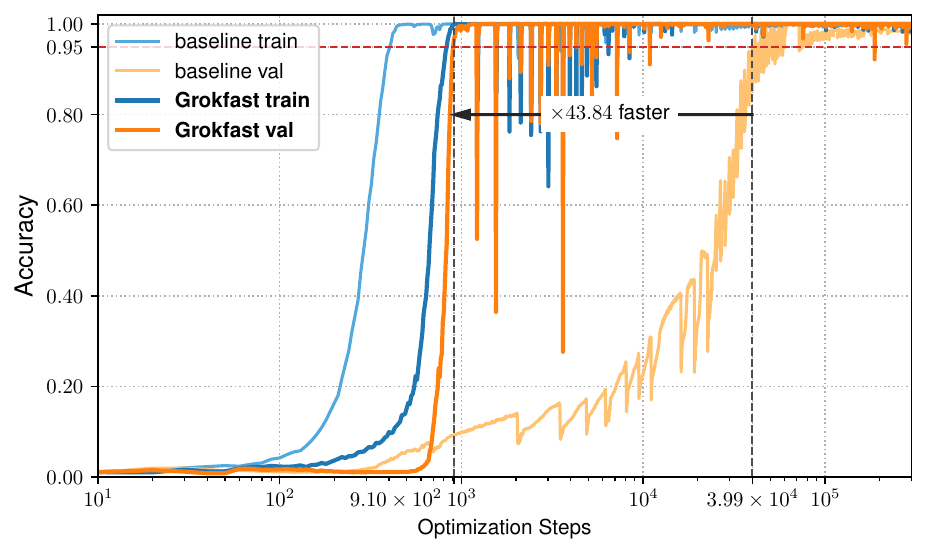}}
\hfill
\subfloat[\small Loss of the modular multiplication task. \label{fig:ema:loss}]{\includegraphics[width=\figwidth]{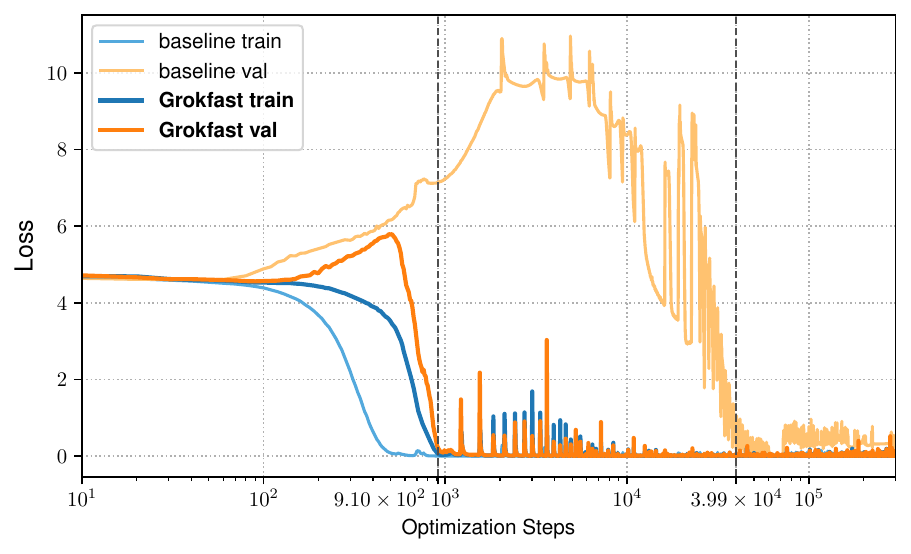}}
\\
\subfloat[\small Accuracy w.r.t. amplifier gain $\lambda\,$. \label{fig:ema:acc_gain}]{\includegraphics[width=\figwidtha]{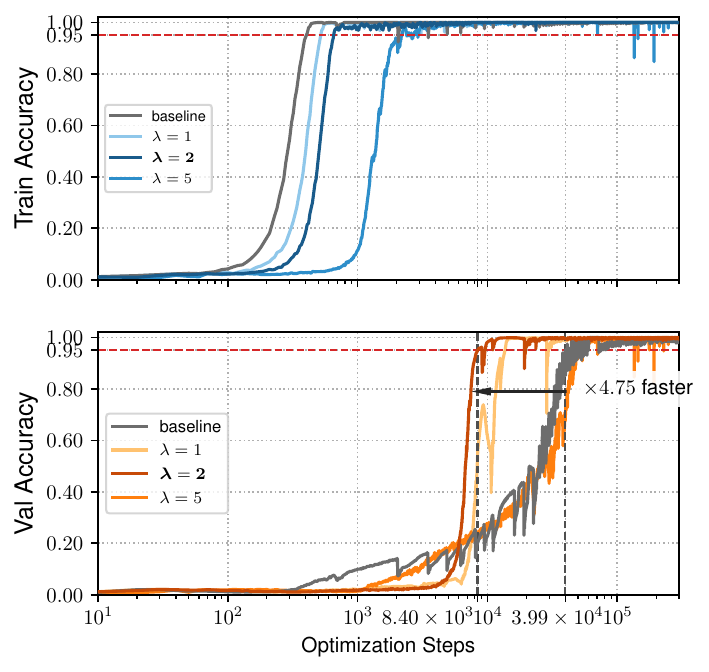}}
\hfill
\subfloat[\small Accuracy w.r.t. momentum $\alpha\,$. \label{fig:ema:acc_window}]{\includegraphics[width=\figwidtha]{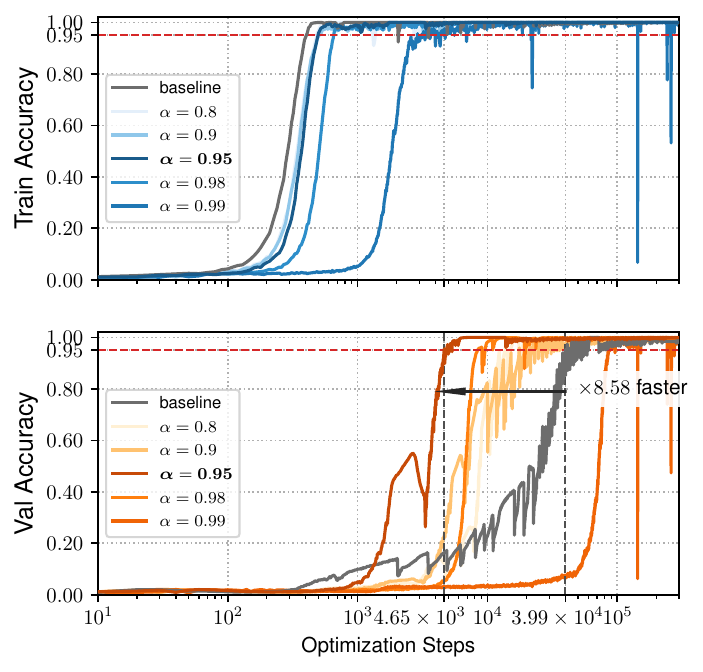}}
\hfill
\subfloat[\small Accuracy w.r.t. weight decay. \label{fig:ema:acc_wd}]{\includegraphics[width=\figwidtha]{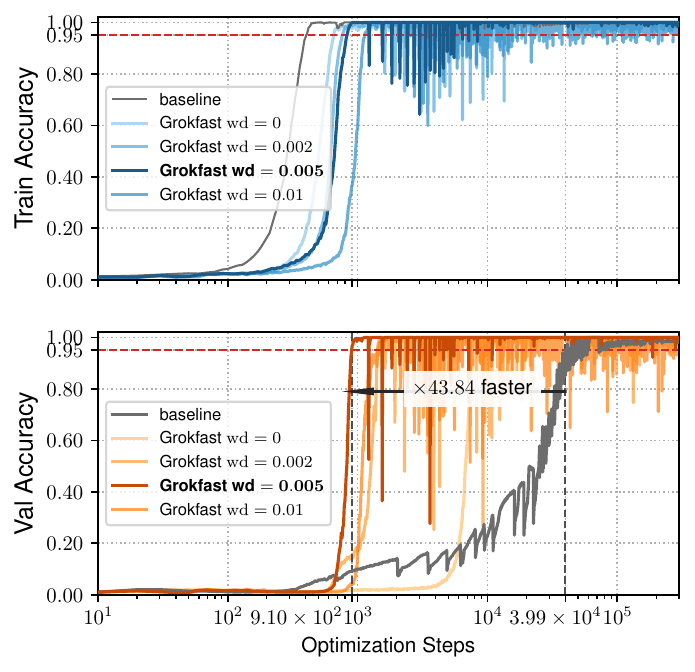}}
\caption{%
\textbf{Acceleration of delayed generation with \textsc{Grokfast-EMA} (\textsc{Grokfast}).}
The task is the same modular multiplication as in Figure~\ref{fig:ma}.
The amount of acceleration relies on three hyperparameters, the amplifier gain $\lambda\,$, the window size $w\,$, and the weight decay $\mathrm{wd}\,$.
Figures~\ref{fig:ema:acc} and~\ref{fig:ema:loss} use $\alpha = 0.98\,$, $\lambda = 2.0\,$, and $\mathrm{wd} = 0.005\,$.
Figures~\ref{fig:ema:acc_gain} and \ref{fig:ema:acc_window} show acceleration results when $\mathrm{wd} = 0\,$.
Figures~\ref{fig:ema:acc_gain}, \ref{fig:ema:acc_window}, and~\ref{fig:ema:acc_wd} use the same set of hyperparameters unless specified otherwise.
}
\label{fig:ema}
\end{center}
\vskip -0.2in
\end{figure}

%%%%%%%%%%%%%%%%%%%%%%%%%%%%%%%%%%%%%%%%%%%%%%%%%%%%%%%%%%%%%%%%%%%%%%%%%%%%%%%
Although the grokking phenomenon was first reported in the algorithmic dataset, Omnigrok~\citep{omnigrok} shows that such behavior can also be observed in a diverse set of tasks with larger and more complex datasets.
This section validates the efficacy of our accelerating algorithm, $\textsc{Grokfast}\,$, for those various tasks and models that exhibit the grokking phenomenon.

%%%%%%%%%%%%%%%%%%%%%%%%%%%%%%%%%%%%%%%%%%%%%%%%%%%%%%%%%%%%%%%%%%%%%%%%%%%%%%%
\subsection{Algorithmic Data} % (Modular Multiplication)}
We first train the same task with the same model as in Section~\ref{sec:2_pre} using our new Algorithm~\ref{alg:main}.
This is the same modular multiplication task devised to report the grokking phenomenon~\citep{grok}.
Consuming much smaller computational resources ($\times 50$ less) compared to Algorithm~\ref{alg:avg}, exponential moving average effectively captures the slow variation of the gradients necessary for accelerating the delayed generalization.
Under the grokking phenomenon, the validation loss of the model first increases before it decreases again later during the late generalization stage as depicted in Figure~\ref{fig:ema:loss} (baseline).
This is well aligned with our state transition model interpretation in \textbf{Q2} of Section~\ref{sec:2_pre:discussion}.
High difference in the validation loss implies that the \emph{generalization route} $\mathrm{B} \rightarrow \mathrm{C}$ is much longer than the \emph{overfitting route} $\mathrm{A} \rightarrow \mathrm{B}$ in the parameter space.
In contrast, models under our \textsc{Grokfast} training algorithms show significantly smaller peak in the validation loss as shown in Figures~\ref{fig:figure_one:result_loss} and~\ref{fig:ema:loss}.
This implies that \textsc{Grokfast} effectively keeps the \emph{generalization route} $\mathrm{B} \rightarrow \mathrm{C}$ close to the global optimum at state $\mathrm{C}\,$.
We will revisit these conjectures in Section~\ref{sec:5_disc} with more visualization.

We also conduct ablation studies to find out the effect of hyperparameters $\lambda\,$, $\alpha\,$, and weight decay for our \textsc{Grokfast} algorithm with an EMA filter.
The optimal hyperparameters are found with grid search as in Section~\ref{sec:2_pre}.
Figures~\ref{fig:ema:acc_gain} through~\ref{fig:ema:acc_wd} summarizes the results.
Recalling that our main idea is at the design of a low-pass filter, the momentum parameter $\alpha$ of Algorithm~\ref{alg:main} as well as the window size parameter $w$ of Algorithm~\ref{alg:avg} are equivalent to the cutoff frequency of the underlying filter.
Experiments in Figures~\ref{fig:ema:acc_gain} through~\ref{fig:ema:acc_wd} as well as those in Figures~\ref{fig:ma} and~\ref{fig:discussion_ma:wd} show that there exists a sweet spot in cutoff frequency that corresponds to the generalization-inducing gradient signal.
From our empirical studies, we recommend $\lambda \in [0.1, 5]$ and $\alpha \in [0.8, 0.99]\,$.
The weight decay is, like in typical optimization problems, dependent on the task of interest.

%%%%%%%%%%%%%%%%%%%%%%%%%%%%%%%%%%%%%%%%%%%%%%%%%%%%%%%%%%%%%%%%%%%%%%%%%%%%%%%
\begin{figure}[t]
\newcommand{\figwidth}{0.495\linewidth}
\begin{minipage}{0.62\linewidth}
\begin{center}
    \subfloat[\small Accuracy of MNIST. \label{fig:mnist:loss_gain}]{\includegraphics[width=\figwidth]{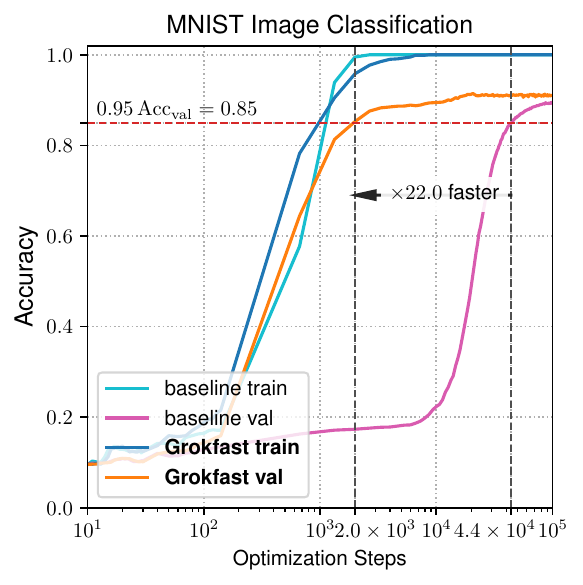}}
    \hfill
    \subfloat[\small Loss of MNIST. \label{fig:mnist:acc_gain}]{\includegraphics[width=\figwidth]{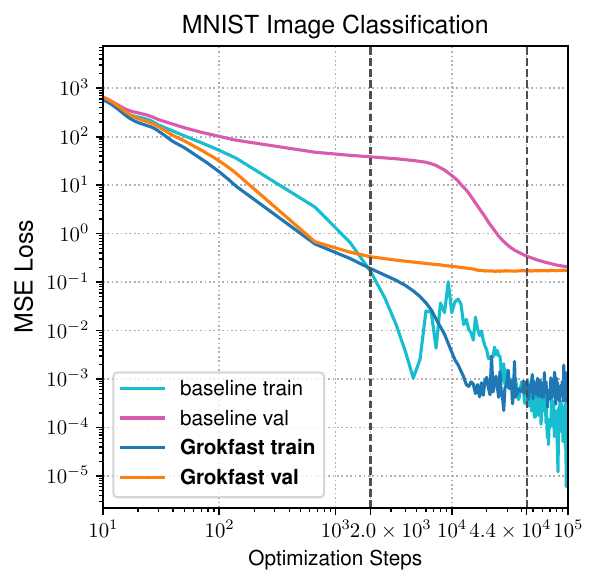}}
    \caption{%
    MNIST results with a three-layer MLP.
    Grokking phenomenon is almost gone with proper hyperparameters.
    }
    \label{fig:mnist}%
\end{center}
\end{minipage}
\hfill
\begin{minipage}{0.363\linewidth}
\vspace{0.5em}
\begin{center}
    \includegraphics[width=\linewidth]{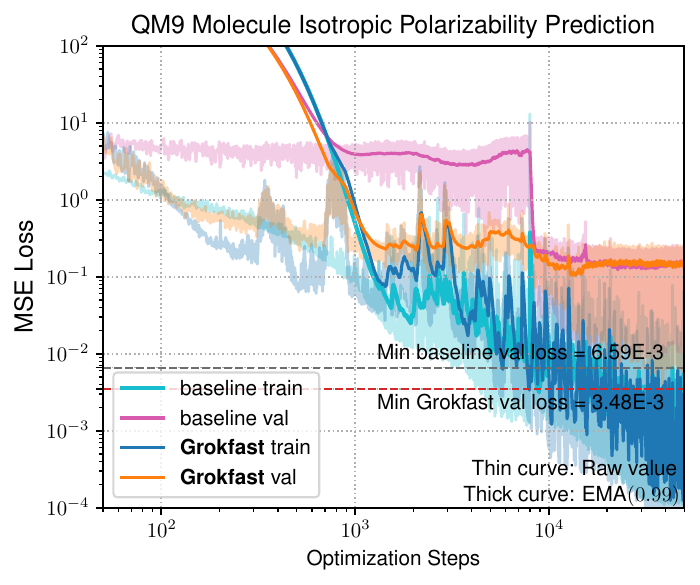}\\  
    % \vspace{2.0em}
    \caption{%
    QM9 dataset results with a GCNN.
    \textsc{Grokfast} achieves faster and better convergence.
    }
    \label{fig:qm9}%
\end{center}
\end{minipage}
\vskip -0.2in
\end{figure}
%%%%%%%%%%%%%%%%%%%%%%%%%%%%%%%%%%%%%%%%%%%%%%%%%%%%%%%%%%%%%%%%%%%%%%%%%%%%%%%
\subsection{MNIST}
Besides the simple algorithmic reasoning task, where the data is relatively simple, \cite{omnigrok} report the similar delayed generalization can also be observed in many conventional tasks if the model goes through a special treatment.
To demonstrate the generalizability of our \textsc{Grokfast} modification of the optimization process, we try to accelerate the speed of generalization under those reported models and tasks.
The first is a three-layer ReLU-MLP trained for MNIST classification task~\citep{mnist} which exhibits the grokking phenomenon.
Figure~\ref{fig:mnist} summarizes the results, showing that our Algorithm~\ref{alg:main} successfully accelerate the delayed generalization.
With $\alpha = 0.8\,$, $\lambda = 0.1\,$, and $\mathrm{wd} = 2.0\,$, the delay until grokking is reduced by $\times 22.0\,$.
Moreover, the final evaluation accuracy becomes higher from 89.8\% to 91.2\%.

\subsection{QM9}
In the next experiment, we train a graph convolutional neural network (GCNN) trained for a molecule dataset QM9~\citep{qm9,qm9_2}.
Since this task does not have an accuracy measure to compare the speed of convergence, we instead compare the convergence speed of the validation loss.
With the same setup as in Omnigrok~\citep{omnigrok}, elaborated in Appendix~\ref{sec:b_impl_detail}, we apply Algorithm~\ref{alg:main} with $\alpha = 0.9\,$, $\lambda = 1.0\,$, and $\mathrm{wd} = 0.01$ to obtain the results in Figure~\ref{fig:qm9}.
The validation loss drops faster \emph{and} by a larger margin under \textsc{Grokfast}.

%%%%%%%%%%%%%%%%%%%%%%%%%%%%%%%%%%%%%%%%%%%%%%%%%%%%%%%%%%%%%%%%%%%%%%%%%%%%%%%
\begin{figure}[t]
\newcommand{\figwidth}{0.495\linewidth}
\begin{center}
\subfloat[\small Accuracy on IMDb sentiment analysis. \label{fig:imdb:acc}]{\includegraphics[width=\figwidth]{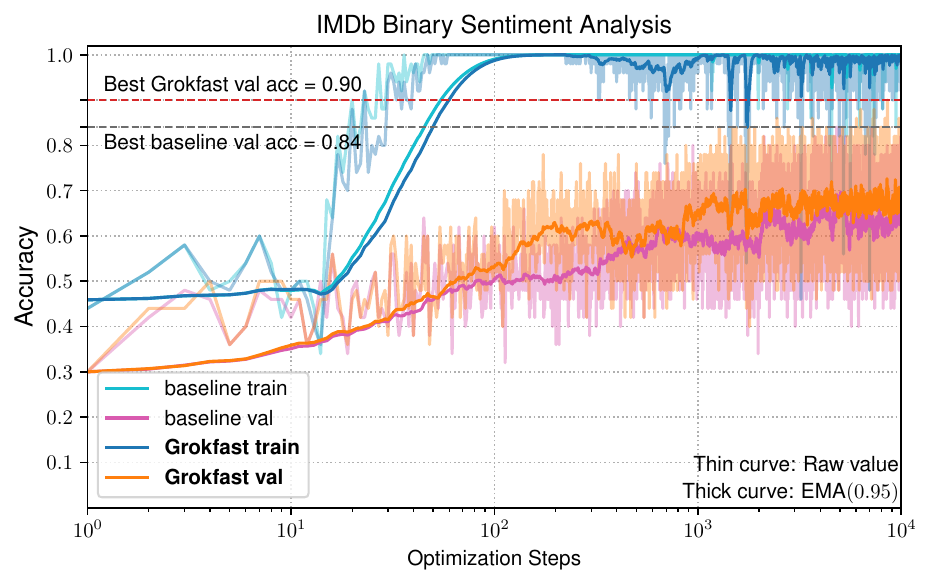}}
\hfill
\subfloat[\small Loss on IMDb sentiment analysis. \label{fig:imdb:loss}]{\includegraphics[width=\figwidth]{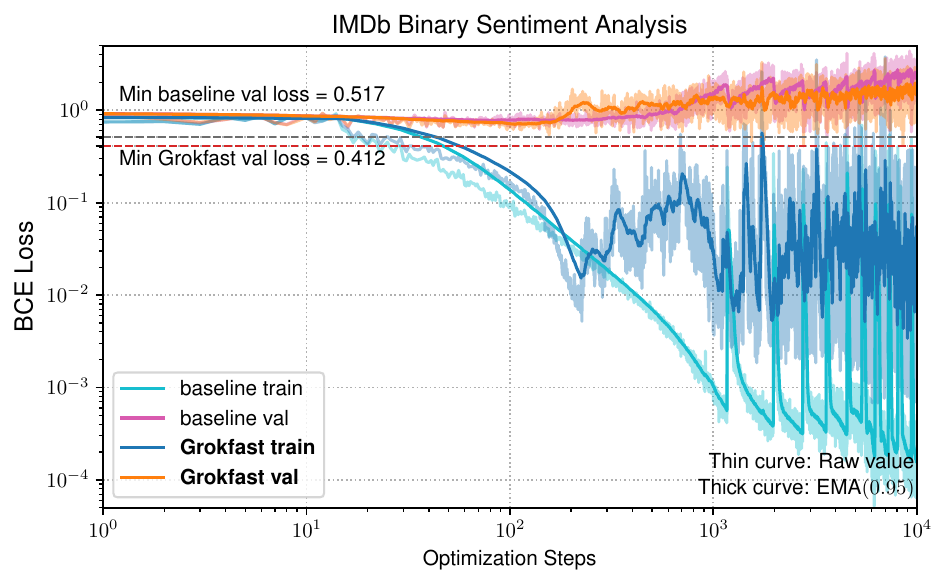}}
\caption{%
IMDb results with a two-layer LSTM.
LSTM exhibits grokking phenomenon if the training begins with larger weight values at initialization.
\textsc{Grokfast} algorithm produces faster generalization with higher validation error and lower validation loss.
We show the exponential moving average (momentum 0.95) of the curves as thick lines for clearer visualization of the trend.
}
\label{fig:imdb}
\end{center}
\vskip -0.2in
\end{figure}
%%%%%%%%%%%%%%%%%%%%%%%%%%%%%%%%%%%%%%%%%%%%%%%%%%%%%%%%%%%%%%%%%%%%%%%%%%%%%%%
\subsection{IMDb}
Finally, we train a 2-layer LSTM~\citep{lstm} network for sentiment analysis in the IMDb dataset~\citep{imdb} under the grokking phenomenon~\citep{omnigrok}.
Figure~\ref{fig:imdb} compares the baseline with the model trained with an optimizer modified by Algorithm~\ref{alg:main} with $\alpha = 0.98\,$, $\lambda = 2.0\,$, and $\mathrm{wd} = 10.0$.
We visualize the convergence speed and quantitatively compare the best validation loss/accuracy.
This experiment section suggests that \textsc{Grokfast} generally boosts performance and convergence speed in diverse tasks under the grokking phenomenon.

\section{More Discussion}
\label{sec:5_disc}
%%%%%%%%%%%%%%%%%%%%%%%%%%%%%%%%%%%%%%%%%%%%%%%%%%%%%%%%%%%%%%%%%%%%%%%%%%%%%%%
\subsection{Difference between Algorithm~\ref{alg:main} and the Momentum in Typical Optimizers}
The lines 7-8 of Algorithm~\ref{alg:main} take a similar form to the momentum variable, which is frequently used in optimizers in deep learning frameworks.
However, notable differences exist:
(1) Instead of using the scaled momentum as a parameter update, we use the smoothened gradient as a \emph{residual}, which is added to the gradient before it is fed into the optimizer.
Rather, the formula is more similar to Nesterov's momentum; however, the filtering is applied \emph{before} the optimizer, which is different from typical applications of Nesterov's momentum such as NAdam~\citep{nadam}.
(2) The line 7-8 is applied to the gradients independently to the underlying optimizer.
The optimizer can be of any type unless it is of the first-order gradient descent-based.
Low-pass filtering the gradients $g(t)$ has the same effect as filtering the post-optimizer parameter updates $u(t)$ as mathematically explained in Appendix~\ref{sec:a_math} with SGD and variants, and empirically proved in the previous sections with Adam~\citep{adam} and AdamW~\citep{weight_decay} optimizers.

%%%%%%%%%%%%%%%%%%%%%%%%%%%%%%%%%%%%%%%%%%%%%%%%%%%%%%%%%%%%%%%%%%%%%%%%%%%%%%%
\subsection{Visualizing Trajectories}
In this final section, we elaborate on our \textit{state transition interpretation} of grokking introduced in Section~\ref{sec:2_pre:discussion}.
Our signal space model of the training dynamics allows us to interpret the training of a model as a random drift of the state in the parameter space.
To visualize the dynamics, we collect all the 423k parameters of the Transformer decoder~\citep{transformer} used in the experiment in Figure~\ref{fig:figure_one} for all iterations, and conduct the PCA to obtain the most salient projection of the parameter space.
The sequence of evolving models are projected onto the space as in Figure~\ref{fig:traj:large}.

Regarding state transition interpretation of grokking, we can observe the followings:
First, Figure~\ref{fig:traj} suggests that, in the baseline setup, the model drifts through a significantly longer pathway from the overfitting (state $\mathrm{B}\,$, 500 steps) to its full generalization (state $\mathrm{C}\,$, 300k steps), compared to the initial state (state $\mathrm{A}\,$, 0 steps) to the overfitting state (state $\mathrm{B}$).
However, under \textsc{Grokfast}, the ratio between the two distances $\overline{\mathrm{AB}}$ and $\overline{\mathrm{BC}}$ in the parameter space becomes more even.
This is further acknowledged by Figure~\ref{fig:traj:dist} and Table~\ref{tab:traj} showing the distances between the models at each state.
Moreover, Table~\ref{tab:traj} suggests that the distances $\overline{\mathrm{AC}}$ between the initial and the final state becomes much ($\times 16$) shorter with our \textsc{Grokfast} algorithm. 
Although the generalization accuracy, the training accuracy, the training loss, and the validation loss at the final state (state $\mathrm{C}$) are similar in both the baseline and \textsc{Grokfast} as showcased in Figure~\ref{fig:figure_one}, we cannot simply say that the states $\mathrm{C}$ of baseline and of \textsc{Grokfast} belong to the same network state.
Likewise the state $\mathrm{B}$ of the baseline and of \textsc{Grokfast} are different.
Figure~\ref{fig:traj:dist} shows average deviation of parameter weights from the initialization point during training of the model under grokking phenomenon.
Interestingly, at achieving overfitting at state $\mathrm{B}\,$, the model under our algorithm deviates $\times 8$ further from the initial point than the baseline does, with $\times 5$ smaller standard deviation in distances from the initial state $\mathrm{A}\,$.
This suggests that although both algorithms exhibit overfitted behavior at state $\mathrm{B}\,$, intermediate model instances at these states form distinct set of parameters with possibly different topologies.
These observations support our interpretation to regard the grokking phenomenon as a state transition between at least three distinct states.
The role of \textsc{Grokfast} is then to provide supervision towards an alternative optimum much nearer from the initial points than the baseline optimizer.

Lastly, the model trained with our \textsc{Grokfast} algorithm shows hundredfold smaller variances of the distances than the baseline as claimed in Table~\ref{tab:traj}.
This implies that training under \textsc{Grokfast} algorithm is much more deterministic than under typical first-order optimizers.
This is possibly related to the similarity between the low-pass filtered gradients from small minibatches with normal gradients from larger minibatches.
However, we have also demonstrated in Section~\ref{sec:2_pre:discussion} that using only the slow, more deterministic component of gradients and completely neglecting the original gradients lead to instability.
Therefore, further investigation is needed to find out the source and the role of this determinism from our \textsc{Grokfast} algorithm, and the reason of its benefits when jointly applied with the faster, more stochastic gradients from baseline optimizers.

%%%%%%%%%%%%%%%%%%%%%%%%%%%%%%%%%%%%%%%%%%%%%%%%%%%%%%%%%%%%%%%%%%%%%%%%%%%%%%%
\begin{figure}[t]
\newcommand{\figwidth}{0.495\linewidth}
\begin{minipage}{0.68\linewidth}
\begin{center}
    \subfloat[\small Parameter trajectories. \label{fig:traj:large}]{\includegraphics[width=\figwidth]{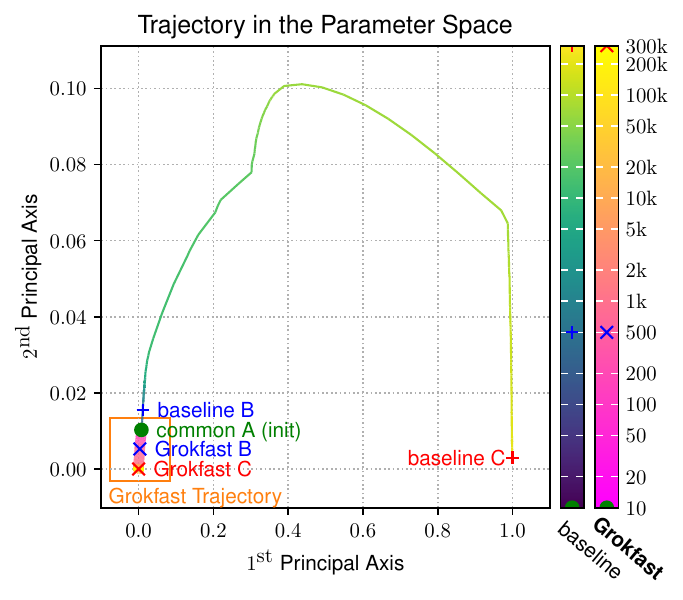}}
    \hfill
    % \subfloat[\small Magnification of the {\color[HTML]{f06406} orange} box. \label{fig:traj:mag}]{\includegraphics[width=\figwidth]{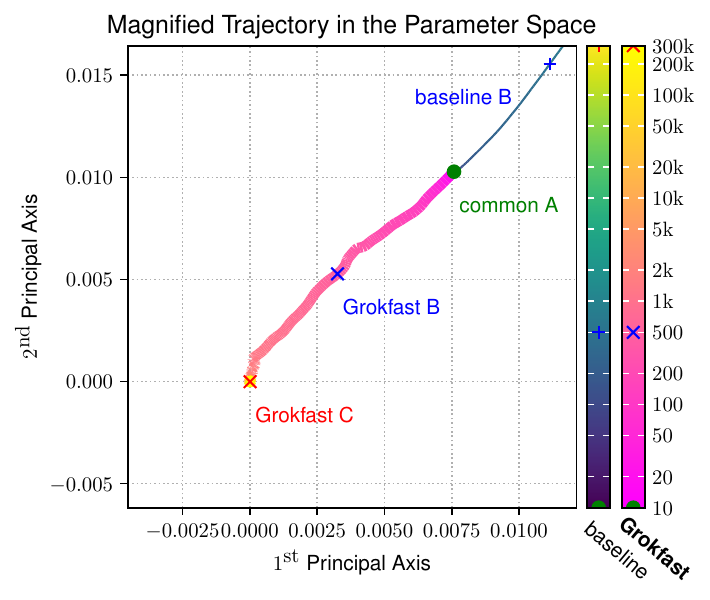}}
    \subfloat[\small Deviation from initial weights. \label{fig:traj:dist}]{\includegraphics[width=\figwidth]{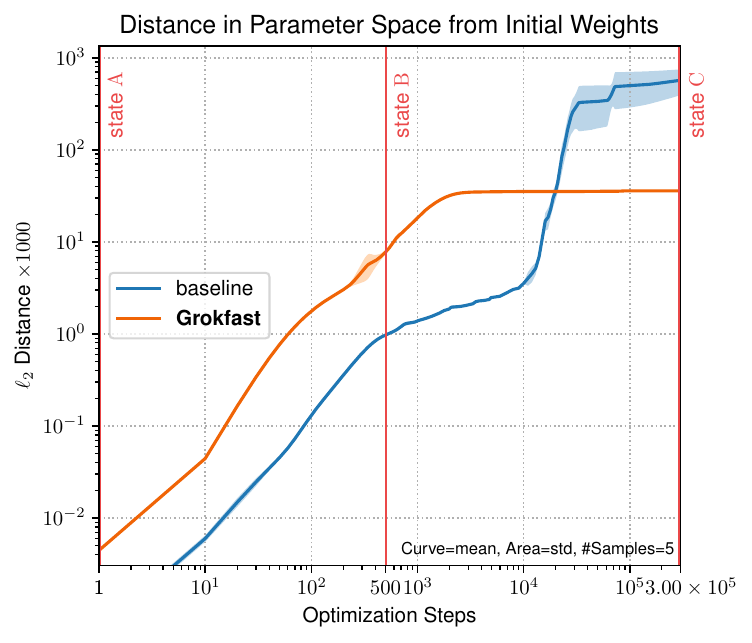}}
    \caption{%
    Trajectories of model parameters from experiments of Figure~\ref{fig:figure_one} projected onto two principal axes of the PCA of the intermediate model parameters of the baseline.
    The two models travel along distinct pathways in the parameter space with different pace.
    }
    \label{fig:traj}%
\end{center}
\end{minipage}
\hfill
\begin{minipage}{0.305\linewidth}
\begin{center}
    \captionof{table}{%
    Parameter space distances between the intermediate models of experiments in Figure~\ref{fig:figure_one}.
    Each state corresponds to each of the markers of Figure~\ref{fig:traj}.
    Average and standard deviations of five instances are shown. 
    \textsc{Grokfast} converges to a nearer point in the parameter space, and the baseline model travels longer to reach the final state.
    }
    \label{tab:traj}
    \resizebox{\linewidth}{!}{%
    \begin{tabular}{c|c|c}
    \toprule
    \multirow{2}{*}{State Pair} & \multicolumn{2}{c}{$\ell_{2}$ Distances ($\times 1000$)} \\
    \cline{2-3}
    \\[-0.9em]
    & Baseline & \textsc{Grokfast} \\
    \midrule
    $\overline{\mathrm{AB}}$ & $\phantom{0}0.97 \pm 2.2\phantom{00}$ & $\phantom{0}7.7 \pm 0.42$ \\
    $\overline{\mathrm{BC}}$ & $563.7 \pm 199.4$ & $15.3 \pm 0.84$ \\
    $\overline{\mathrm{AC}}$ & $570.7 \pm 199.8$ & $35.8 \pm 0.98$ \\
    \bottomrule
    \end{tabular}%
    }
\end{center}
\end{minipage}
\vskip -0.2in
\end{figure}
%%%%%%%%%%%%%%%%%%%%%%%%%%%%%%%%%%%%%%%%%%%%%%%%%%%%%%%%%%%%%%%%%%%%%%%%%%%%%%%

\section{Related Work}
\label{sec:6_survey}
\paragraph{Grokking.}
The recently discovered grokking phenomenon~\citep{grok} signifies one possibility of overparameterized neural networks generalizing (and reasoning) beyond memorization of the given dataset.
Most of the works, thereafter, focus on identifying its mechanism.
Recent studies associated grokking with a double descent phenomenon in the DNN mapping geometry training dynamics~\citep{grok_dd_1}, the speed of pattern learning
~\citep{grok_dd_2}, and the sizes of models and datasets~\citep{grok_dd_3}, wherein the validation error initially increases and then decreases with the expansion of model parameters~\citep{dd1, dd3}.
To investigate the internal roles of each model component during grokking, \cite{grok_mechanistic} employed mechanistic interpretability, a technique from the XAI domain, and revealed that grokking may not occur abruptly but rather exhibits an internal progress measure.
Their assertion posits that the model captures slow, generalizing patterns, underscoring the critical role of proper optimization.
Interestingly, while weight decay amplifies the double descent effect \citep{dd4_wd}, it contributes to enhanced generalization in grokking scenarios~\citep{grok}.
\cite{omnigrok} found more examples of grokking in various tasks and analyzed their training mechanism through loss landscape analysis.
\cite{grok_optimizer} found a similarity between grokking and the slingshot mechanism in adaptive optimizers.
\cite{hidden_progress} argued that optimizers reach delayed generalization by amplifying sparse solutions through hidden progress.
Regularizers such as weight decay~\citep{grok_mechanistic} and the choice of the optimizer~\citep{tug:aet} are highlighted as important factors in training a model that groks.
Our work is a continuation of this discussion by providing a generalizable tool for the practical study of the grokking phenomenon.
Through our discussion, we suggest a state transition model of the grokking and visualize the trajectory of the model weights in the parameter space during training.

%%%%%%%%%%%%%%%%%%%%%%%%%%%%%%%%%%%%%%%%%%%%%%%%%%%%%%%%%%%%%%%%%%%%%%%%%%%%%%%
\paragraph{Optimization techniques.}
At the core of the study of grokking lies optimization techniques~\citep{grok_optimizer}.
Studies have shown that generalization patterns of the model vary significantly depending on various optimization methods~\citep{grok, grok_ma}. 
\cite{grok} demonstrated that various factors related to optimization, such as (mini-)batch training~\citep{mini_batch}, the choice of optimizer, weight decay~\citep{weight_decay}, noise injection~\citep{noise}, dropout~\citep{dropout}, and learning rate, influence the model's grokking pattern. 
\cite{grok_mechanistic} argued that grokking does not occur without proper regularization.
Further, they demonstrated that techniques such as weight decay, L2 norm, and dropout induce grokking, but L1 norm does not.
On the other hand, \cite{grok_optimizer} argued that grokking \textit{can} occur without explicit regularization, attributing this to the optimizer's ``visible slingshot mechanism'' acting as an implicit regularizer.
\cite{tug:aet} suggested using a larger learning rate for the input embedding segment, facilitating unified learning of the generalization pattern.
Unlike these revisiting of the known training techniques, we started from a state space model and a dual domain of the training dynamics.
This led us to develop an optimizer augmentation algorithm, \textsc{Grokfast}, that can be applied to any existing first-order optimizers to accelerate the grokking effect for practical usage. 

% \section{Limitation}
% \label{sec:5_limit}
% \input{sections/5_limitation}

\section{Conclusion}
\label{sec:7_conclusion}
Our reinterpretation of the deviation of each model parameter into a random signal over training iteration allows us to separate gradient updates into fast-varying and slow-varying components.
By amplifying the latter with low-pass filtering, we can bring forward the moment of sudden late generalization, i.e., grokking, reducing the number of required training iterations by up to $\times 50\,$.
Our comprehensive experiments and analyses suggest that our state space interpretation and the frequency representation of the training dynamics is useful for studying the grokking phenomenon.

% \section*{References}
{
\small
\bibliography{references.bib}
}

%%%%%%%%%%%%%%%%%%%%%%%%%%%%%%%%%%%%%%%%%%%%%%%%%%%%%%%%%%%%

\appendix

\section{Frequency Responses of the Parameter Updates under Grokfast}
\label{sec:a_math}
In Section~\ref{sec:1_intro}, we have silently assumed that under a first-order optimizer $u(g(t), t)\,$, amplifying the low-frequency components of the gradient signal $g(t)$ of an arbitrary parameter $\theta(t)$ over a discrete timestep $t$ has the same effect of amplifying the low-frequency component of the parameter updates $u(t) = u(g(t), t)\,$.
This section mathematically elaborates on the effect of gradient filters $h(t)$ to the parameter update signals $u(t)$ in the most frequently-used type of optimizers: SGD with momentum.

%%%%%%%

Stochastic gradient descent with optional momentum term is the simplest and the most widely used optimization algorithm in the deep learning communities.
Here, the parameter update $u(t) = \theta(t + 1) - \theta(t)$ of a parameter $\theta(t)$ at timestep $t$ and its intermediate momentum $m(t)$ is defined as:
\begin{align}
    \label{eq:a_math:sgdm_momentum}
    m(t) &= \mu m(t - 1) + (1 - \tau) g(t)\,, \\
    \label{eq:a_math:sgdm_full}
    u(t) &= - \eta m(t)\,,
\end{align}
where $\mu$ is the scalar momentum, $\tau$ is the dampening constant for the momentum, and $\eta$ is the learning rate.
This class of optimizers can be thought of as linear systems with state $m(t)$ that receives an input $g(t)$ to produce an output $u(t)\,$.

To compare the difference between the frequency responses of the parameter update $u(t)$ and of the modified update $\hat{u}(t)$ in equation~\eqref{eq:2_pre:update_lpf_time}, we can think of an equivalent filter $\hat{h}(t)$ defined to satisfy the following relationship in addition to equations~\eqref{eq:2_pre:lpf_time} and~\eqref{eq:2_pre:update_lpf_time}:
\begin{equation}
    \label{eq:a_math:lpf_grad_time_equiv}
    \hat{u}(t) = u(\hat{g}(t), t) = u(g(t) + h(t) * g(t), t) = u(t) + \hat{h}(t) * u(t)\,.
\end{equation}
From our assumption of the linear time-invariant, scalar filters $h(t)$ and the linear optimizer, we can deduce the equivalence between $h(t)$ and $\hat{h}(t)\,$.
The following theorem is a generalized claim that applies to any SGD-based first-order optimizers including Nesterov's momentum.
\begin{theorem}
\label{thm:linear_optim}
Let $g(t)$ be a scalar signal defined over a discrete time $t \in \{0, 1, \ldots, T\}\,$.
Let $h(t)$ be a univariate time-invariant filter defined over the same domain $t\,$.
A linear optimizer $O$ is defined as:
\begin{align}
    \label{eq:a_math:system_state}
    x(t) &= A x(t - 1) + B g(t)\,, \quad t > 0\,, \\
    \label{eq:a_math:system_output}
    u(t) &= C x(t) + D g(t)\,, \qquad\,\,\,\, t \geq 0\,,
\end{align}
with scalar coefficients $A, B, C,$ and $D\,$, and $x(0) = g(0)\,$.
The output of the system $u(t)$ is, therefore, a function of $g(t)$ and $t\,$, i.e., $u(t) = u(g(t), t)\,$.
Let the modified input $\hat{g}(t)\,$, the modified output $\hat{u}(t)\,$, and the equivalent filter $\hat{h}(t)$ be defined to satisfy the equations~\eqref{eq:2_pre:lpf_time} and~\eqref{eq:a_math:lpf_grad_time_equiv}.
Then,
\begin{equation}
    \hat{h}(t) = h(t)\,,
\end{equation}
for $t \in \{0, 1, \ldots, T\}\,$.
\end{theorem}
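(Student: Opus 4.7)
The plan is to view the optimizer defined by the state and output equations as a discrete linear time-invariant (LTI) system mapping the gradient signal $g$ to the update signal $u$, and to exploit the fact that two LTI systems in series commute. Since the filter $h$ is also LTI, feeding $g$ first through the filter and then through the optimizer yields the same output as feeding $g$ first through the optimizer and then through the same filter; comparing this with the defining identity for $\hat{h}$ forces $\hat{h} = h$.

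Concretely, I would first solve the state-space recursion in closed form: iterating the state update gives $x(t) = A^{t} x(0) + \sum_{\tau=1}^{t} A^{t-\tau} B\, g(\tau)$, and plugging into the output equation expresses $u(t)$ as a causal linear combination of $g(0), g(1), \ldots, g(t)$. Collecting coefficients yields an impulse response $k(t)$ (with the boundary contribution from $x(0) = g(0)$ absorbed into $k(0)$) such that $u = k * g$ on $\{0, \ldots, T\}$. Taking the discrete-time Fourier transform then produces a scalar transfer function $K(\omega)$ with $U(\omega) = K(\omega)\, G(\omega)$.

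Second, applying the optimizer to the filtered input $\hat{g} = g + h * g$ and invoking linearity together with the associativity and commutativity of convolution gives $\hat{u} = k * (g + h * g) = k * g + h * (k * g) = u + h * u$. Matching this against the equivalent-filter identity yields $(\hat{h} - h) * u = 0$ for every admissible input signal $g$. Specializing to $g = \delta$, for which $u = k$ is nontrivial whenever the optimizer itself is, then delivers $\hat{h}(t) = h(t)$ pointwise on $\{0, \ldots, T\}$. Equivalently, in the frequency domain the entire argument collapses to $\hat{U}(\omega) = K(\omega)(1 + H(\omega))\, G(\omega) = (1 + H(\omega))\, U(\omega)$, whence $\hat{H}(\omega) = H(\omega)$.

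The main subtlety I expect to encounter is the nonstandard initial condition $x(0) = g(0)$, which technically breaks strict time-invariance at $t = 0$ and so prevents an immediate appeal to the LTI commutation principle. I would handle it by checking that the same convention is used for both the original and the filtered trajectories, so that the boundary contribution appears identically in $u$ and in $\hat{u}$ and cancels from their difference; once this bookkeeping is done, the remainder of the argument is a one-line convolution calculation, and the generalization to any first-order optimizer with additional internal state (e.g.\ Nesterov) follows by simply enlarging the state vector $x(t)$ without changing the proof structure.
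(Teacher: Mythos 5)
Your argument is correct and rests on the same key fact as the paper's proof --- the optimizer defined by \eqref{eq:a_math:system_state}--\eqref{eq:a_math:system_output} is a causal linear time-invariant map, and LTI maps commute with one another --- but you execute it in the time domain where the paper works in the frequency domain. The paper extends all signals to $t \in \mathbb{Z}$, takes the discrete-time Fourier transform of the state-space recursion to obtain a transfer function $H_{\text{in-out}}(\omega) = BC/(1 - Ae^{-i\omega}) + D$ that is unchanged by the input modification, and reads off $\hat{H} = H$ from the chain $\hat{U}/U = \hat{G}/G = 1 + H$. You instead unroll the recursion into an explicit impulse response $k$ with $u = k * g$, obtain $\hat{u} = k*(g + h*g) = u + h*u$ from associativity and commutativity of convolution, and identify $\hat{h} = h$ by testing against $g = \delta$. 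The two arguments are duals, but your route buys something concrete: it sidesteps the convergence of the bilateral Fourier sums, which is genuinely delicate in the paper's version (the backward extension $x(t) = A^{t}(1-B)g(0)$ for $t<0$ is unbounded as $t \to -\infty$ when $0 < A < 1$ and $B \neq 1$, so $X(\omega)$ is not literally well-defined there). The frequency-domain route, in exchange, turns the final identification into a one-line division of transfer functions.

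One caveat on the subtlety you correctly flag. The initial condition $x(0) = g(0)$ (rather than the LTI-consistent $x(0) = Bg(0)$) contributes an extra term $e(t) = C(1-B)A^{t}g(0)$ to $u$; the corresponding term in $\hat{u}$ is $C(1-B)A^{t}\hat{g}(0) = (1+h(0))\,e(t)$, i.e.\ scaled by $1+h(0)$ rather than convolved with $\delta + h$. Hence the boundary contributions do \emph{not} cancel from $\hat{u} - (u + h*u)$ in general; the residual $h(0)e - h*e$ vanishes exactly only when $B = 1$ or $h$ is supported at lag $0$, and otherwise decays like $A^{t}$. This is not a defect relative to the paper, whose own proof glosses over the same point, and it is immaterial for the intended instantiations (SGD with momentum and zero dampening has $B = 1-\tau = 1$), but your claim that the boundary bookkeeping ``cancels from the difference'' should be weakened to an exact statement in those cases or an asymptotic one in general.
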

\begin{proof}[Proof of Theorem~\ref{thm:linear_optim}]
For simplicity, we first adopt discrete-time Fourier transform over $t \in \mathbb{Z}\,$.
That is, we assume that the signals are defined across every positive and negative integer $t\,$.
Since the value of $u(t)$ can be defined arbitrarily outside the interval $[0, T]$ without modifying the optimization algorithm, we can manually assign $g(t)$ and $x(t)$ for $t \notin [0, T]$ as:
\begin{align}
    \label{eq:a_math:g_external}
    g(t) &= 0 \qquad\qquad\qquad\quad\,\,\, t \notin [0, T]\,,\\
    \label{eq:a_math:x_external}
    x(t) &= 
        \begin{cases}
            A^{t} (1 - B) g(0) & t < 0\,, \\
            A^{t - T} x(T) & t > T\,. \\
        \end{cases}
\end{align}
Then, equations~\eqref{eq:a_math:system_state} and~\eqref{eq:a_math:system_output} hold for $t \notin [0, T]\,$.
Note that to make the optimizer $O$ stable, the scalar coefficient $A$ should satisfy $0 < A < 1\,$.
Therefore, the signals $g(t)$ and $x(t)$ are well-defined.

Consider a discrete-time Fourier transform $\mathcal{F}$ defined as:
\begin{equation}
    \label{eq:a_math:fourier}
    \mathcal{F}\{f(t)\}(\omega) =  \sum_{t = -\infty}^{\infty} f(t) e^{-i \omega t}\,.
\end{equation}
In the frequency domain, with $G(\omega) = \mathcal{F}\{g(t)\}\,$, $U(\omega) = \mathcal{F}\{u(t)\}\,$, and $X(\omega) = \mathcal{F}\{x(t)\}\,$, the optimizer $O$ can be equivalently represented as:
\begin{align}
    \label{eq:a_math:system_state_freq}
    X(\omega) &= A e^{-i \omega} X(\omega) + B G(\omega)\,, \\
    \label{eq:a_math:system_output_freq}
    U(\omega) &= C X(\omega) + D G(\omega)\,.
\end{align}
We can obtain the transfer functions $H_{\text{in-state}}$ and $H_{\text{in-out}}$ that converts $G$ to $X$ and then to $U\,$:
\begin{align}
    \label{eq:a_math:momentum_freq}
    H_{\text{in-state}}(\omega) &\coloneqq \frac{X(\omega)}{G(\omega)} = \frac{B}{1 - A e^{-i \omega}}\,, \\
    \label{eq:a_math:sgdm_full_freq}
    H_{\text{in-out}}(\omega) &\coloneqq \frac{U(\omega)}{G(\omega)} = C \frac{X(\omega)}{G(\omega)} + D = \frac{BC}{1 - A e^{-i \omega}} + D\,.
\end{align}
If the input $g(t)$ is filtered with a convolutional filter $h(t)$ and then added to itself as equations~\eqref{eq:2_pre:lpf_time} and~\eqref{eq:2_pre:lpf_freq}, the state $x(t)$ and the output $u(t)$ of the optimizer $O$ is changed accordingly while keeping equations~\eqref{eq:a_math:system_state} and~\eqref{eq:a_math:system_output} hold.
We denote $\hat{x}(t)$ and $\hat{u}(t)$ as the modified state and output of the system and $\hat{X}(\omega)$ and $\hat{U}(\omega)$ as their spectra.
If the filter $h(t)$ is causal, that is $h(t) = 0$ for $t < 0\,$, then we can similarly let $\hat{x}(0) = \hat{g}(0)$ and replace ${x}(t)$ and ${u}(t)$ with $\hat{x}(t)$ and $\hat{u}(t)$ in equations~\eqref{eq:a_math:g_external} and~\eqref{eq:a_math:x_external} to define an IIR system suitable for the infinite-window discrete-time Fourier transform $\mathcal{F}\,$:
\begin{align}
    \label{eq:a_math:system_state_freq_mod}
    \hat{X}(\omega) &= A e^{-i \omega} \hat{X}(\omega) + B \hat{G}(\omega)\,, \\
    \label{eq:a_math:system_output_freq_mod}
    \hat{U}(\omega) &= C \hat{X}(\omega) + D \hat{G}(\omega)\,.
\end{align}
Since the coefficients of the linear systems are the same, the transfer functions are identical:
\begin{align}
    \label{eq:a_math:momentum_freq_mod}
    \hat{H}_{\text{in-state}}(\omega) &\coloneqq \frac{\hat{X}(\omega)}{\hat{G}(\omega)} = \frac{B}{1 - A e^{-i \omega}} \qquad\equiv H_{\text{in-state}}(\omega)\,, \\
    \label{eq:a_math:sgdm_full_freq_mod}
    \hat{H}_{\text{in-out}}(\omega) &\coloneqq \frac{\hat{U}(\omega)}{\hat{G}(\omega)} = \frac{BC}{1 - A e^{-i \omega}} + D \equiv H_{\text{in-out}}(\omega)\,.
\end{align}

From equation~\eqref{eq:2_pre:lpf_freq}, the transfer function of the filter $H_{\text{amp}}(\omega)$ is:
\begin{equation}
    \label{eq:a_math:lpf_tf_freq}
    H_{\text{amp}}(\omega) = \frac{\hat{G}(\omega)}{G(\omega)} = 1 + H(\omega)\,,
\end{equation}
where $H(\omega) = \mathcal{F}\{h(t)\}\,$.
The equivalent post-filter $\hat{h}(t)$ defined by equation~\eqref{eq:a_math:lpf_grad_time_equiv} gives another transfer function between the outputs $u(t)$ and $\hat{u}(t)$ of the system:
\begin{equation}
    \label{eq:a_math:lpf_tf_output_freq}
    \hat{H}_{\text{amp}}(\omega) = \frac{\hat{U}(\omega)}{U(\omega)} = 1 + \hat{H}(\omega)\,.
\end{equation}
From equations~\eqref{eq:a_math:sgdm_full_freq} and~\eqref{eq:a_math:sgdm_full_freq_mod}, we have:
\begin{equation}
    \label{eq:a_math:identity_tf_inout}
    \hat{H}_{\text{amp}}(\omega) = \frac{\hat{U}(\omega)}{U(\omega)} = \frac{\hat{G}(\omega)}{G(\omega)} = H_{\text{amp}}(\omega)\,.
\end{equation}
Therefore, we get:
\begin{equation}
    \hat{H}(\omega) \equiv {H}(\omega)\,.
\end{equation}
This completes the proof.
\end{proof}
In other words, applying any filter $h(t)$ to the sequence of gradients $g(t)$ is equivalent to the same filter $h(t)$ applied to the parameter update $u(t)$ for any linear optimizer $O\,$.
This implies that a low-pass \emph{gradient} filter $h(t)$ guarantees the same low-pass property in the modified parameter update signal $\hat{u}(t)\,$.
In many off-the-shelf autograd packages such as PyTorch~\citep{pytorch}, filtering the gradients is easier and more straightforward than filtering the intermediate parameter updates.
The former only adds a few more lines to the outermost application code\footnote{See our implementation at \url{https://github.com/ironjr/grokfast}.}, whereas the latter requires full implementation of the dedicated optimizer object.
% The former only adds a few more lines to the outermost application code\footnote{Will be released.}, whereas the latter requires full implementation of the dedicated optimizer object.
Note that the above proof holds regardless of the design of the filter $h(t)$ unless there exists a one-to-one correspondence between $H_{\text{amp}}$ and $H\,$.

The followings are direct consequences of the Theorem~\ref{thm:linear_optim}.
\begin{proposition}[SGD with momentum]
\label{prop:sgdm}
Let $t \in \{0, 1, \ldots, T\}$ be a discrete timestep.
Let $g(t)$ be a sequence of gradients of a parameter $\theta$ sampled from a stochastic machine learning framework $g(t) \sim M(\theta(t), t)$ and a stochastic gradient descent optimizer $O(\mu, \tau, \eta)$ with a parameter update function $u(g(t), t) = u(t) = \theta(t + 1) - \theta(t)\,$, a momentum $\mu\,$, a damping constant $\tau\,$, and a learning rate $\eta\,$.
The parameter update $u(t)$ is, therefore, defined as:
\begin{align}
    \label{eq:a_math:sgdm_momentum_v2}
    m(t) &= \mu m(t - 1) + (1 - \tau) g(t)\,, \\
    \label{eq:a_math:sgdm_full_v2}
    u(t) &= - \eta m(t)\,,
\end{align}
with a scalar momentum term $m(t)$ for each parameter $\theta$ with $m(0) = g(0)\,$.
Let $h(t)$ be a scalar, time-invariant, convolutional gradient filter.
Let the modified input $\hat{g}(t)\,$, the modified output $\hat{u}(t)\,$, and the equivalent filter $\hat{h}(t)$ be defined to satisfy the equations~\eqref{eq:2_pre:lpf_time} and~\eqref{eq:a_math:lpf_grad_time_equiv}.
Then,
\begin{equation}
    \label{eq:prop_sgdm_claim}
    \hat{h}(t) = h(t)\,,
\end{equation}
for $t \in \{0, 1, \ldots, T\}\,$.
\end{proposition}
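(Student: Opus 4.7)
The plan is to recognize Proposition~\ref{prop:sgdm} as an immediate specialization of Theorem~\ref{thm:linear_optim}, whose hypotheses are already framed around a generic linear optimizer defined by scalar coefficients $A, B, C, D$. Since SGD with momentum as written in equations~\eqref{eq:a_math:sgdm_momentum_v2}--\eqref{eq:a_math:sgdm_full_v2} is already in state-space form with a single scalar state $m(t)$, the proof reduces almost entirely to a bookkeeping exercise: identify the coefficients, check the stability condition, match the initial condition, and invoke the theorem.

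Concretely, I would first introduce $x(t) \coloneqq m(t)$ as the state variable and read off the correspondence $A = \mu$, $B = 1 - \tau$, $C = -\eta$, $D = 0$ by comparing equation~\eqref{eq:a_math:sgdm_momentum_v2} with equation~\eqref{eq:a_math:system_state} and equation~\eqref{eq:a_math:sgdm_full_v2} with equation~\eqref{eq:a_math:system_output}. Next, I would verify the two conditions the theorem imposes on the system: the initial condition $x(0) = g(0)$ is satisfied because the proposition explicitly assumes $m(0) = g(0)$, and the stability condition $0 < A < 1$ translates into $0 < \mu < 1$, which is the standard range for the momentum scalar used in practice (a brief remark can note that values outside this range fall outside the scope of both results).

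With the identification complete, I would invoke Theorem~\ref{thm:linear_optim} directly: applying any scalar, time-invariant, convolutional filter $h(t)$ to the gradient stream $g(t)$ induces, through $u(g(t),t)$ as in equations~\eqref{eq:2_pre:lpf_time} and~\eqref{eq:a_math:lpf_grad_time_equiv}, the same filter on the parameter update stream $u(t)$, so that the equivalent filter $\hat{h}(t)$ satisfies $\hat{h}(t) = h(t)$ for every $t \in \{0, 1, \ldots, T\}$, which is exactly equation~\eqref{eq:prop_sgdm_claim}.

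There is no genuine obstacle here; the only subtlety worth flagging is that Theorem~\ref{thm:linear_optim} extends signals outside $[0,T]$ via equations~\eqref{eq:a_math:g_external}--\eqref{eq:a_math:x_external} to permit the infinite-window discrete-time Fourier transform, and one should note in passing that this extension is well-defined here precisely because $A = \mu \in (0,1)$. Beyond that, no new frequency-domain manipulation is needed and the proof is essentially a one-line corollary after the coefficient identification is stated.
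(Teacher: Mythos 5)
Your proposal is correct and matches the paper's proof exactly: the paper likewise sets $A = \mu$, $B = 1 - \tau$, $C = -\eta$, $D = 0$ and invokes Theorem~\ref{thm:linear_optim}. Your additional remarks on the initial condition $m(0) = g(0)$ and the stability requirement $0 < \mu < 1$ are sensible bookkeeping that the paper leaves implicit.
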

\begin{proof}[Proof of Proposition~\ref{prop:sgdm}]
Let $A = \mu\,$, $B = 1 - \tau\,$, $C = -\eta\,$, and $D = 0\,$.
By Theorem~\ref{thm:linear_optim}, equation~\eqref{eq:prop_sgdm_claim} holds.
\end{proof}

\begin{proposition}[SGD with Nesterov's momentum]
\label{prop:nesterov}
Let $t \in \{0, 1, \ldots, T\}$ be a discrete timestep.
Let $g(t)$ be a sequence of gradients of a parameter $\theta$ sampled from a stochastic machine learning framework $g(t) \sim M(\theta(t), t)$ and a stochastic gradient descent optimizer $O(\mu, \tau, \eta)$ with a parameter update function $u(g(t), t) = u(t) = \theta(t + 1) - \theta(t)\,$, a momentum $\mu\,$, a damping constant $\tau\,$, and a learning rate $\eta\,$.
The parameter update $u(t)$ is, therefore, defined as:
\begin{align}
    \label{eq:a_math:sgdm_momentum_nesterov}
    m(t) &= \mu m(t - 1) + (1 - \tau) g(t)\,, \\
    \label{eq:a_math:sgdm_full_nesterov}
    u(t) &= - \eta (g(t) + \mu m(t))\,,
\end{align}
with a scalar momentum term $m(t)$ for each parameter $\theta$ with $m(0) = g(0)\,$.
Let $h(t)$ be a scalar, time-invariant, convolutional gradient filter.
Let the modified input $\hat{g}(t)\,$, the modified output $\hat{u}(t)\,$, and the equivalent filter $\hat{h}(t)$ be defined to satisfy the equations~\eqref{eq:2_pre:lpf_time} and~\eqref{eq:a_math:lpf_grad_time_equiv}.
Then,
\begin{equation}
    \label{eq:prop_sgdm_claim_nesterov}
    \hat{h}(t) = h(t)\,,
\end{equation}
for $t \in \{0, 1, \ldots, T\}\,$.
\end{proposition}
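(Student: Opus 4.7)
The plan is to recognize Nesterov's momentum as a special instance of the general linear state-space optimizer of Theorem~\ref{thm:linear_optim} and then invoke that theorem directly. First I would designate the scalar state as $x(t) = m(t)$. Equation~\eqref{eq:a_math:sgdm_momentum_nesterov} then reads $x(t) = \mu\, x(t-1) + (1-\tau)\, g(t)$, matching the state recursion \eqref{eq:a_math:system_state} with $A = \mu$ and $B = 1 - \tau$. Substituting $m(t) = x(t)$ into equation~\eqref{eq:a_math:sgdm_full_nesterov} gives $u(t) = -\eta\mu\, x(t) - \eta\, g(t)$, which fits the output equation \eqref{eq:a_math:system_output} with $C = -\eta\mu$ and $D = -\eta$.

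The only substantive contrast with Proposition~\ref{prop:sgdm} is that the direct feedthrough $D$ is now nonzero, reflecting the look-ahead term $g(t)$ that appears explicitly in Nesterov's update alongside the momentum state. This is harmless: Theorem~\ref{thm:linear_optim} is stated with arbitrary scalar coefficients $A, B, C, D$, so a nonzero $D$ is fully covered. I would also verify the boundary hypotheses: the initial condition $m(0) = g(0)$ reproduces $x(0) = g(0)$, the stability requirement $0 < A < 1$ reduces to $0 < \mu < 1$ (standard for any meaningful momentum), and the filter $h(t)$ is already assumed scalar, time-invariant, and convolutional, exactly as required.

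Having placed Nesterov's scheme inside the canonical framework, Theorem~\ref{thm:linear_optim} immediately yields $\hat{h}(t) = h(t)$ for $t \in \{0, 1, \ldots, T\}$, which is equation~\eqref{eq:prop_sgdm_claim_nesterov}. I do not expect any real obstacle: the entire content of the argument is the observation that Nesterov acceleration, despite its look-ahead, remains a linear time-invariant scalar state-space system, so the filter-equivalence property established in the theorem transfers verbatim. The only thing to double-check is the bookkeeping of the coefficients $C$ and $D$, and in particular that no additional delay on $g(t)$ has been introduced that would break the time-invariant feedthrough structure used in the proof of the theorem.
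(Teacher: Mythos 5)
Your proposal is correct and matches the paper's proof exactly: the paper likewise sets $A = \mu$, $B = 1 - \tau$, $C = -\eta\mu$, $D = -\eta$ and invokes Theorem~\ref{thm:linear_optim}. Your additional checks of the initial condition and stability are sound bookkeeping but add nothing beyond the paper's one-line argument.
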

\begin{proof}[Proof of Proposition~\ref{prop:nesterov}]
Let $A = \mu\,$, $B = 1 - \tau\,$, $C = -\eta\mu\,$, and $D = -\eta\,$.
By Theorem~\ref{thm:linear_optim}, equation~\eqref{eq:prop_sgdm_claim_nesterov} holds.
\end{proof}

\section{Task Details}
\label{sec:b_impl_detail}
For completeness, this section summarizes the implementation details of each task dealt in Section~\ref{sec:4_exp}.
The readers can also consult our official implementation in PyTorch~\citep{pytorch}.

%%%%%%%
\subsection{Binary Operation (Algorithmic Data)}
This is the description of algorithmic data used throughout the manuscript.
Following the first report on the grokking phenomenon~\citep{grok}, we demonstrate our acceleration algorithms with a binary operation $x \cdot y \; (\mathrm{mod}\; p)\,$, with $p=97\,$.
The network is a two-layer decoder-only Transformer~\citep{transformer} with hidden dimension of 128 and  4 heads in its attention.
The positional embedding has length of 5, and GELU~\citep{gelu} and layer normalization~\citep{layernorm} is used throughout the network.
After the Transformer blocks, the output is fed into a layer normalization and a linear output layer to return logits.
We use cross entropy loss to train the network and an Adam~\citep{adam} with betas $(\beta_{1}, \beta_{2}) = (0.9, 0.98)\,$, a constant learning rate of $10^{-3}\,$, batch size of $512\,$, and linear learning rate warmup schedule over the first 10 iterations.

%%%%%%%
\subsection{MNIST}
We train a three-layer MLP with hidden width of 200 and ReLU activations for the MNIST classification task \citep{mnist}.
Under $\times 8$ larger weight initialization than Kaiming initialization~\citep{kaiming_init}, the network is known to exhibit the grokking phenomenon~\citep{omnigrok}.
The network receives flattened grayscale images of size $28 \times 28$ and outputs 10-dimensional logits to calculate MSE losses between one-hot encoded labels.
We use the batch size of $200$ and trained with an AdamW optimizer~\citep{weight_decay} with a constant learning rate of $10^{-3}$ until $10^{5}$ training iterations.
We use a smaller subset of 1000 images from training images to train the network in order to simulate overfitting environment.
All the other hyperparameters are set by default.

%%%%%%%
\subsection{QM9}
To demonstrate the effectiveness of our algorithm on a graph convolutional neural network, we use QM9 small molecules dataset~\citep{qm9,qm9_2} to estimate the isotropic polarizability.
Our Graph ConvNet has two graph convolution layers with input channel of 11 (QM9 edge features), output channel of 16, and hidden channel of 32.
Each graph convolution is followed by a ReLU.
Each convolution layer consists of two linear layers with an internal ReLU activation with hidden channel of 32.
The output of the Graph ConvNet is a global average pooling, followed by a two-layer MLP with a ReLU and hidden channel of 32.
To simulate the overfitting environment, we use the first 100 samples from the data.
The data is again randomly split into train and validation sets with 50:50 size ratio.
We use batch size of 32, an AdamW optimizer~\citep{weight_decay} with a constant learning rate of $10^{-3}\,$.
The network is initialized with weights $\times 3$ larger than that of Kaiming initialization~\citep{kaiming_init} and trained for 50k iterations.

%%%%%%%
\subsection{IMDb}
For IMDb dataset~\citep{imdb}, we use LSTM~\citep{lstm} with two layers, embedding dimension of 64, hidden dimension of 256, and vocabulary size of 1001, including the padding token.
The network is followed by a single fully connected layer with output dimension of 1 with sigmoid activation to classify the positive/negative sentiment of each review string.
The dataset was preprocessed by tokenizing the 1000 most frequent words from the review.
The list of integer tokens are padded by zeros to form an array of reviews with the same length of 500.
The network was trained by a binary cross entropy loss and an AdamW optimizer~\citep{weight_decay} with learning rate of $3 \times 10^{-4}$ and batch size of 50.
We trained the model with the first 1000 rows from the dataset, split into train and validation sets with 75:25 size ratio.
We stopped the training at 10k iterations as shown in Figure~\ref{fig:imdb}.

\section{Autograd Implementation}
\label{sec:c_code}
We have argued that our implementation of Algorithm~\ref{alg:avg} and~\ref{alg:main} costs only a few additional lines of code.
We demonstrate this by presenting the exact code we developed with the PyTorch~\citep{pytorch} autograd package.
The readers who are interested can also consult our official implementation\footnote{\url{https://github.com/ironjr/grokfast}}.

Algorithms~\ref{alg:avg} and~\ref{alg:main} are implemented as follows:
\begin{lstlisting}[language=python,firstnumber=1,mathescape=true]
# Grokfast-MA (Algorithm 1)
def gradfilter_ma(
    m: nn.Module,
    grads: Optional[Dict[str, deque]] = None,
    window_size: int = 100,
    lamb: float = 5.0,
    filter_type: Literal['mean', 'sum'] = 'mean',
    warmup: bool = True,
) -> Dict[str, deque]:
    if grads is None:
        grads = {n: deque(maxlen=window_size) for n, p in m.named_parameters() if p.requires_grad}

    for n, p in m.named_parameters():
        if p.requires_grad:
            grads[n].append(p.grad.data.detach())

            if not warmup or len(grads[n]) == window_size:
                if filter_type == "mean":
                    avg = sum(grads[n]) / len(grads[n])
                elif filter_type == "sum":
                    avg = sum(grads[n])
                else:
                    raise ValueError(f"Unrecognized filter_type {filter_type}")
                p.grad.data = p.grad.data + avg * lamb

    return grads

# Grokfast (Algorithm 2)
def gradfilter_ema(
    m: nn.Module,
    grads: Optional[Dict[str, torch.Tensor]] = None,
    alpha: float = 0.98,
    lamb: float = 2.0,
) -> Dict[str, torch.Tensor]:
    if grads is None:
        grads = {n: p.grad.data.detach() for n, p in m.named_parameters() if p.requires_grad}

    for n, p in m.named_parameters():
        if p.requires_grad:
            grads[n] = grads[n] * alpha + p.grad.data.detach() * (1 - alpha)
            p.grad.data = p.grad.data + grads[n] * lamb

    return grads
\end{lstlisting}
This helper method can be applied to any optimization framework involving the autograd package by inserting a single line between the calculation of the gradients and the optimizer call as follows:
\begin{lstlisting}[language=python,firstnumber=1,mathescape=true]
# ... any initialization code before starting the training loop.
grads = None

# Training loop.
for batch in dataloader:
    model.zero_grad()
    output = model(batch)
    loss = criteria(output)

    # Calculate the gradients.
    loss.backward()

    # Option 1: Grokfast (has argument alpha, lamb)
    grads = gradfilter_ema(model, grads=grads, alpha=alpha, lamb=lamb)
    # Option 2: Grokfast-MA (has argument window_size, lamb)
    # grads = gradfilter_ma(model, grads=grads, window_size=window_size, lamb=lamb)

    # Call the optimizer.
    optimizer.step()

    # ... any additional logging codes.
\end{lstlisting}
Note that line 2 and line 14 in the code above are the only modification we made.

\section{Time and Memory Requirements}
\label{sec:d_results}
%%%%%%%
\begin{table}[t]
\newcommand{\figwidth}{0.49\linewidth}
\begin{minipage}{\linewidth}
    \caption{%
    Quantitative results of \textsc{Grokfast} with a Transformer decoder trained for the algorithmic data (modular multiplication).
    The experiments corresponds to that of Figure~\ref{fig:figure_one} and~\ref{fig:ema:acc} \& \ref{fig:ema:loss}.
    }
    \label{tab:appx_exp_time:alg}
    \resizebox{\linewidth}{!}{%
    \begin{tabular}{lcccc}
    \toprule
    Algorithm & Iterations @ 95\% Val. Acc. & Wall Clock Time @ 95\% Val. Acc. (s) & VRAM (MB) & Latency Per Iteration (s) \\
    \midrule
    Baseline & $39890$ & $5984$ & $290$ & $0.15$ \\
    \textsc{Grokfast-MA} & $790 \, (\times \, 50.49 \downarrow)$ & $292 \, (\times \, 20.49 \downarrow)$ & $458$ & $0.37$ \\
    \textsc{Grokfast} & $910 \, (\times \, 43.84 \downarrow)$ & $137 \, (\times \, 43.79 \downarrow)$ & $294$ & $0.15$ \\
    \bottomrule
    \end{tabular}%
    }
\end{minipage}
\\
\begin{minipage}{\linewidth}
\begin{center}
    \caption{%
    Quantitative results of \textsc{Grokfast} with an MLP trained for MNIST (Figure~\ref{fig:mnist}).
    }
    \label{tab:appx_exp_time:mnist}
    \resizebox{\linewidth}{!}{%
    \begin{tabular}{lcccc}
    \toprule
    Algorithm & Iterations @ 95\% Val. Acc. & Wall Clock Time @ 95\% Val. Acc. (s) & VRAM (MB) & Latency Per Iteration (ms) \\
    \midrule
    Baseline & $44022$ & $1928$ & $196$ & $43.8$ \\
    \textsc{Grokfast} & $2001 \, (\times \, 22.00 \downarrow)$ & $87.8 \, (\times \, 21.96 \downarrow)$ & $198$ & $43.9$ \\
    \bottomrule
    \end{tabular}%
    }
\end{center}
\end{minipage}
\\
\begin{minipage}{0.45\linewidth}
\begin{center}
    \caption{%
    Quantitative results of \textsc{Grokfast} with a G-CNN trained for QM9 (Figure~\ref{fig:qm9}).
    }
    \label{tab:appx_exp_time:qm9}
    \resizebox{\linewidth}{!}{%
    \begin{tabular}{lccc}
    \toprule
    Algorithm & Minimum Val. Loss & VRAM (MB) & Latency Per Iteration (ms) \\
    \midrule
    Baseline & $0.00659$ & $216$ & $40.2$ \\
    \textsc{Grokfast} & $0.00348$ & $216$ & $41.4$ \\
    \bottomrule
    \end{tabular}%
    }
\end{center}
\end{minipage}
\hfill
\begin{minipage}{0.545\linewidth}
\begin{center}
    \caption{%
    Quantitative results of \textsc{Grokfast} with an LSTM trained for IMDb (Figure~\ref{fig:imdb}).
    }
    \label{tab:appx_exp_time:imdb}
    \resizebox{\linewidth}{!}{%
    \begin{tabular}{lcccc}
    \toprule
    Algorithm & Best Val. Acc. & Minimum Val. Loss & VRAM (MB) & Latency Per Iteration (ms) \\
    \midrule
    Baseline & $0.84$ & $0.517$ & $754$ & $20.4$ \\
    \textsc{Grokfast} & $0.90$ & $0.412$ & $762$ & $21.2$ \\
    \bottomrule
    \end{tabular}%
    }
\end{center}
\end{minipage}
% \vskip -0.2in
\end{table}
%%%%%%%

This section delivers additional demonstration of the efficiency of our \textsc{Grokfast} algorithm.
As we have argued in Section~\ref{sec:2_pre:limit} and Section~\ref{sec:3_method}, the additional computational burden from our augmentation is compensated by the larger-scale acceleration of the delayed generalization.
The additional cost of VRAM memory is also negligible compared the baseline.
The time and the memory requirements in the Tables~\ref{tab:appx_exp_time:alg} through~\ref{tab:appx_exp_time:imdb} are measured with a single GTX 1080 Ti GPU.

\section{More Visualization}
\label{sec:e_visual}
%%%%%%%
\begin{figure}[t]
\newcommand{\figwidth}{0.495\linewidth}
\begin{minipage}{0.95\linewidth}
\begin{center}
    \subfloat[\small Parameter trajectories. \label{fig:traj:large}]{\includegraphics[width=\figwidth]{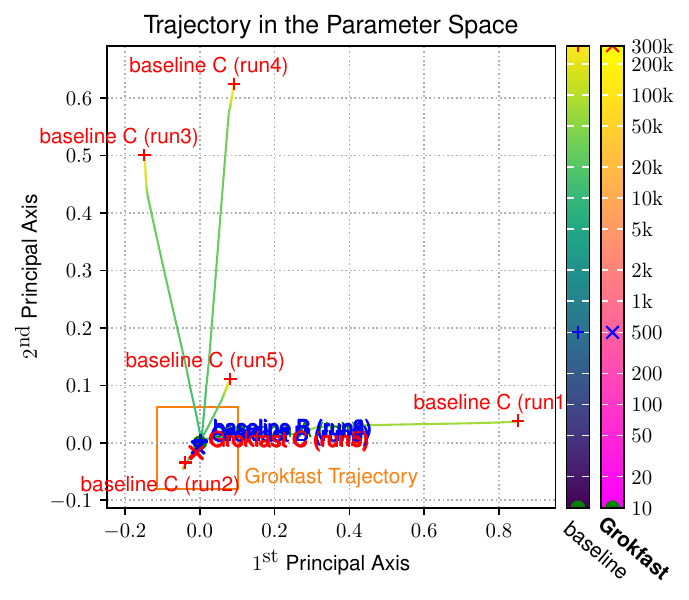}}
    \hfill
    \subfloat[\small Magnification of the {\color[HTML]{f06406} orange} box. \label{fig:traj:mag}]{\includegraphics[width=\figwidth]{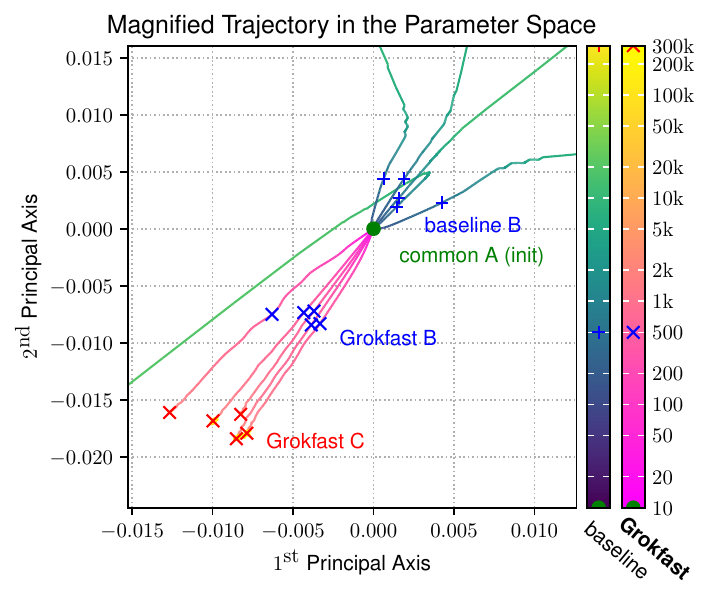}}
    \caption{%
    Normalized trajectories of model parameters from five runs of the experiment of Figure~\ref{fig:figure_one}.
    The baseline optimization algorithm \textit{without} \textsc{Grokfast} guide the model to the overfitting states (state $\textrm{B}$) relatively closer to the initialization states (state $\textrm{A}$).
    After reaching the overfitting state, however, the model parameters travel far away to reach the generalization states (state $\textrm{C}$).
    \textsc{Grokfast} instead guide the model parameters to the alternative generalization states (state $\textrm{C}$), which are much closer to the initialization states (state $\textrm{A}$).
    }
    \label{fig:appx:traj}%
\end{center}
\end{minipage}
\vskip -0.2in
\end{figure}
%%%%%%%

We finally provide more visualization in addition to Section~\ref{sec:5_disc} in order to understand the training dynamics under our \textsc{Grokfast} algorithm.
Figure~\ref{fig:appx:traj} shows five more runs from the same experiments in Figure~\ref{fig:figure_one} and~\ref{fig:traj} with different seeds.
We saved all the 423k parameters of the Transformer decoder~\citep{transformer} from every training iteration, likewise in Figure~\ref{fig:traj}.
The parameters of a model checkpoint from each run at each iteration are reshaped into a single long vector.
The vectorized parameters are then normalized by subtracting them by the model's initialized weights.
This way, we can align the trajectories by centering the initial states (state $\textrm{A}$) of all the experiments at the origin.
The sequence of parameter differences of the ten runs from the two algorithms, i.e., baseline and $\textsc{Grokfast}$, forms a tensor of shape ((Number of Runs) $\cdot$ (Number of Iterations)) $\times$ (Number of Parameters) = $(\textrm{(Number of Sampled Iterations)} \times 422784)\,$.
From this we perform the PCA to obtain the projection matrix of shape $422784 \times 2\,$.
This matrix projects the parameter differences from each of the model checkpoint onto the two most salient directions of variations.
We mark the initialization state (state $\textrm{A}$), the overfitting state (state $\textrm{B}\,$, 500 iterations), and the generalization state (state $\textrm{C}$) from each run in the two-dimensional plot.
The results are Figure~\ref{fig:appx:traj}.

We first notice that the overfitting states (state $\textrm{B}$) from each of the two optimizers are clearly different.
The baseline algorithm without $\textsc{Grokfast}$ reaches the overfitting states (state $\textrm{B}$), which are relatively nearer to the initialization states (state $\textrm{A}$) than those of $\textsc{Grokfast}$ algorithm.
However, as soon as the model overfits, the weights continue to deviate far from the points where overfitting first occured (state $\textrm{B}$).
As a result, the final generalization (state $\textrm{C}$) happens much far away from the initialized weights (state $\textrm{A}$).
It is notable that the five generalization states (state $\textrm{C}$) from different instances of the baseline optimizer vary significantly.
The difference between the states $\textrm{C}$ of the baseline is much larger than that of the baseline's overfitting states (state $\textrm{B}$) and that of the states $\textrm{B}$ and $\textrm{C}$ from our \textsc{Grokfast} algorithm.
In contrast, the training dynamics from \textsc{Grokfast} results in a distinct set of trajectories that lead to the generalization states (state $\textrm{C}$) much closer to the initial weights.
Moreover, difference within the trajectories from \textsc{Grokfast} is much smaller than that of the baseline algorithms.
This conclusion is also verifiable from Figure~\ref{fig:traj:mag} and Table~\ref{tab:traj} in a quantitative manner.

\end{document}